\newtheorem{theorem}{Theorem}
\newtheorem{lemma}[theorem]{Lemma}
\theoremstyle{definition}
\title{Anomaly Detection by Leveraging Incomplete Anomalous Knowledge with Anomaly-Aware Bidirectional GANs}
\author{
Bowen Tian$^1$\and
Qinliang Su$^{1,2}$\footnote{Corresponding author.}\and
Jian Yin$^{2,3}$\and
\affiliations
$^1$School of Computer Science and Engineering, Sun Yat-sen University, Guangzhou, China\\
$^2$Guangdong Key Laboratory of Big Data Analysis and Processing, Guangzhou, China\\
$^3$School of Artificial Intelligence, Sun Yat-sen University, Guangdong, China\\
\emails
tianbw@mail2.sysu.edu.cn, 
\{suqliang,issjyin\}@mail.sysu.edu.cn
}
\begin{document}

\maketitle

\begin{abstract}
  The goal of anomaly detection is to identify anomalous samples from normal ones. In this paper, a small number of anomalies are assumed to be available at the training stage, but they are assumed to be collected only from several anomaly types, leaving the majority of anomaly types not represented in the collected anomaly dataset at all. To effectively leverage this kind of incomplete anomalous knowledge represented by the collected anomalies, we propose to learn a probability distribution that can not only model the normal samples, but also guarantee to assign low density values for the collected anomalies. To this end, an anomaly-aware generative adversarial network (GAN) is developed, which, in addition to modeling the normal samples as most GANs do, is able to explicitly avoid assigning probabilities for collected anomalous samples. Moreover, to facilitate the computation of anomaly detection criteria like reconstruction error, the proposed anomaly-aware GAN is designed to be bidirectional, attaching an encoder for the generator.  Extensive experimental results demonstrate that our proposed method is able to effectively make use of the incomplete anomalous information, leading to significant performance gains compared to existing methods\footnote{Code is available at \url{https://github.com/tbw162/AA-BiGAN}.}.

\end{abstract}

\section{Introduction}

Anomaly detection aims to identify anomalous samples from normal ones, with applications widely found in fields ranging from network security \cite{garcia2009anomaly}, financial fraud detection \cite{abdallah2016fraud}, industrial damage detection to medical diagnosis \cite{Medical} etc. In anomaly detection, normal data generally refers to samples preserving some kinds of regularities (typically composed of one or several types of samples), while anomaly often lacks an explicit and clear definition. Generally, any samples that deviate significantly from the normal ones are considered as anomalies. Obviously, according to this  definition, the types of anomalies are numerous and sometimes even infinite. The extreme diversity of anomalies distinguish the anomaly detection problem from other tasks and also poses a significant challenge to it.

\begin{figure}[!t]
\centering

\subfigure[Unsupervised]{\label{Un_main}\includegraphics[width=0.30\linewidth]{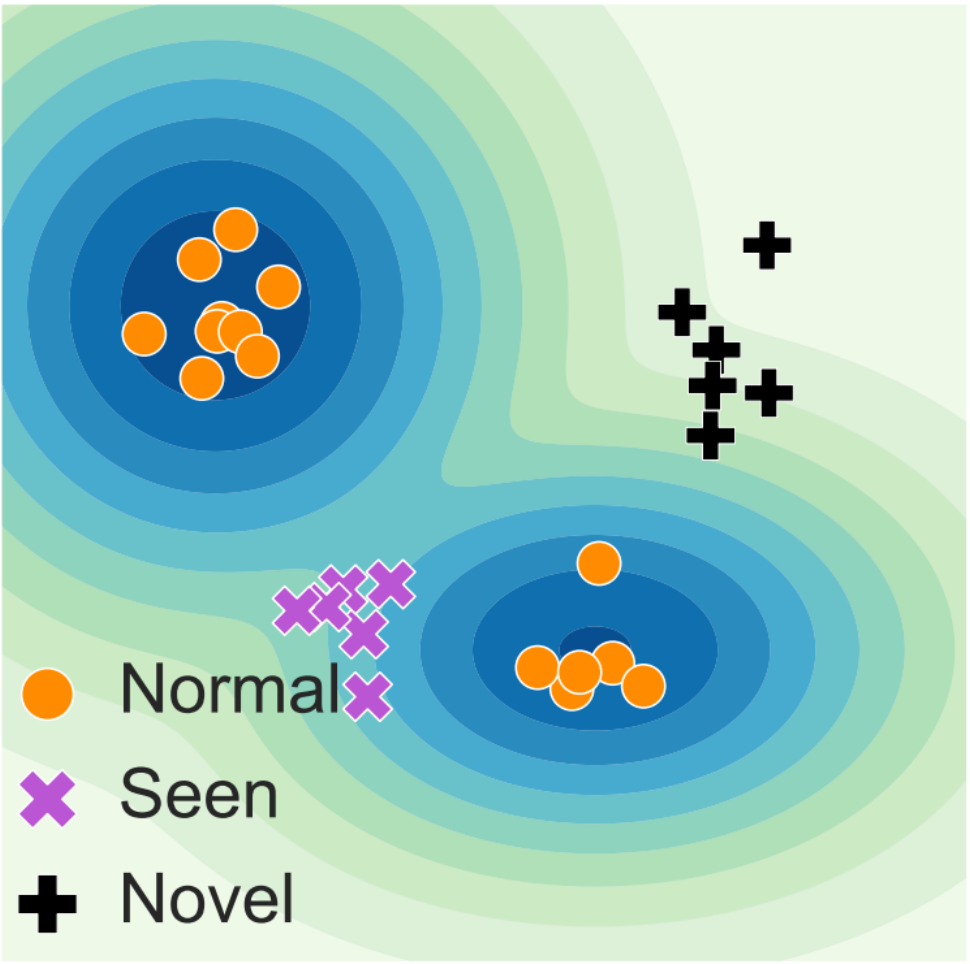}}
\hspace{0.02\linewidth}
\subfigure[Supervised]{\label{Su_main}\includegraphics[width=0.30\linewidth]{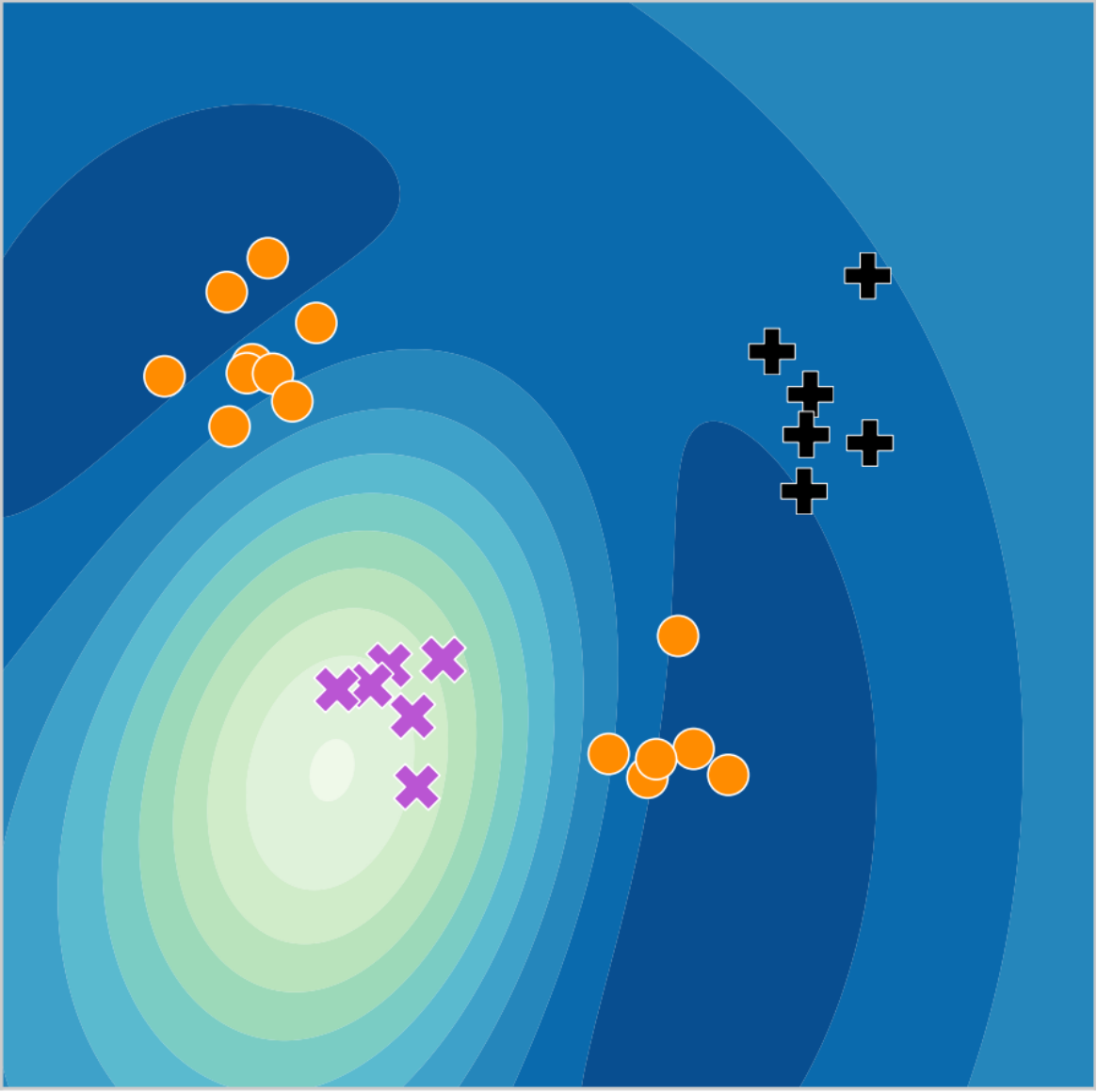}}
\hspace{0.02\linewidth}
\subfigure[Ours]{\label{AEBGAN_main}\includegraphics[width=0.30\linewidth]{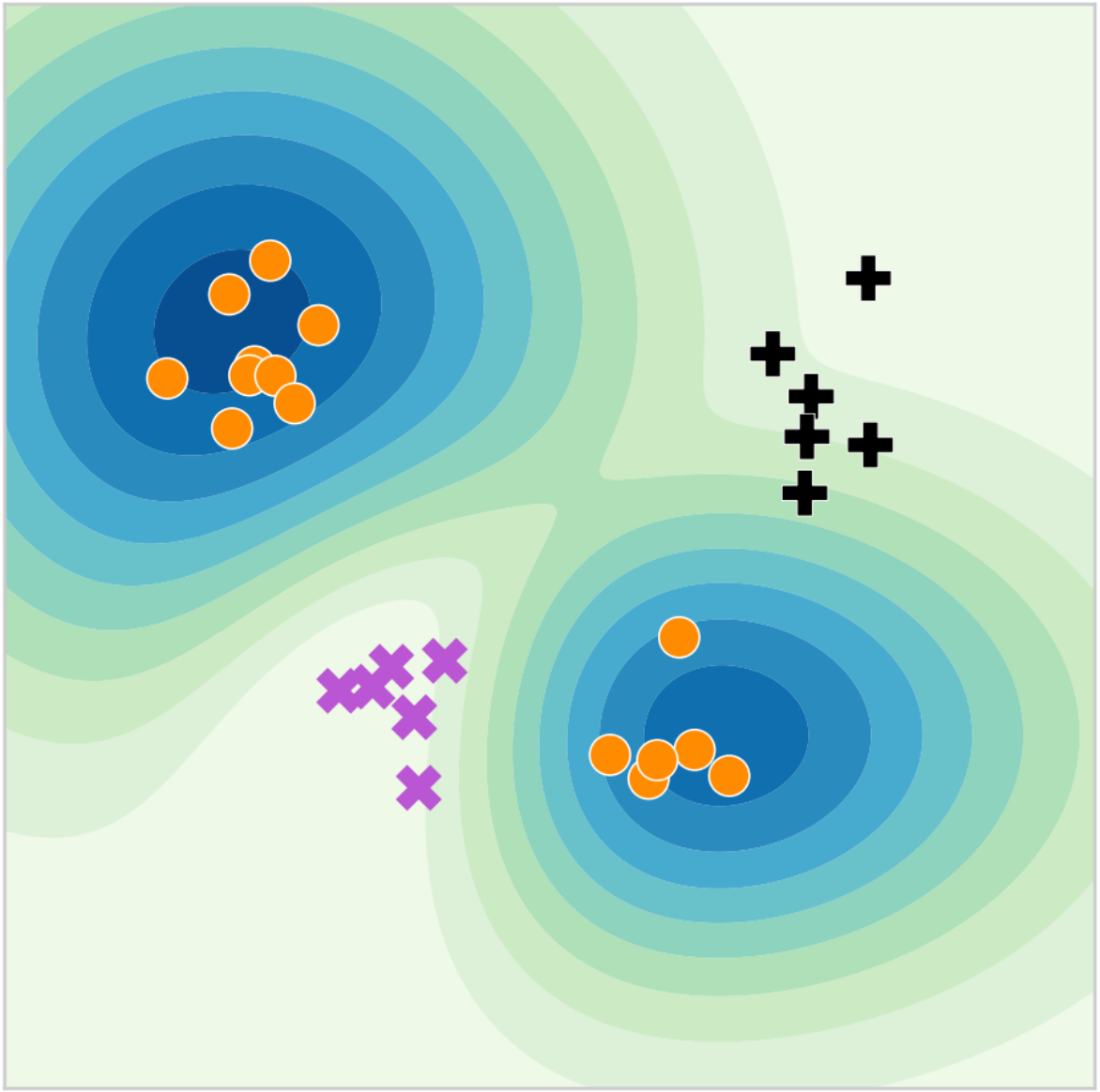}}
\caption{Illustration of different approaches to leverage the collected anomalies for anomaly detection. a) Neglecting the collected anomalies; b) Finding a decision boundary separating the collected anomalies and normal samples; c) Modeling  distribution of normal samples, while ensuring low densities for collected anomalies. The darker the color is, the higher the density value is.}
\label{main_concept}
\vspace{-4mm}
\end{figure}

Existing anomaly detection methods can be roughly grouped into supervised and unsupervised categories. By assuming a full accessibility to anomalies during the training, supervised methods turn the anomaly detection problem into a classification problem. Anomalous samples are generally more difficult and expensive to collect, thus unsupervised methods that only rely on the use of normal samples are used more widely in practice. Despite it is difficult to collect anomalies from every anomaly type, collecting anomalies from one or several types is often possible. For instance, by viewing images of skin diseases as anomalies, it is easy to collect some images of commonly observed skin diseases, but impossible to collect images for all known and unknown skin diseases. Under the circumstances with incompletely collected anomalies, we can still resort to the unsupervised methods by neglecting the anomalies already collected. But this approach will definitely not lead to the optimal performance for not leveraging the available anomalous knowledge, as illustrated in Fig. \ref{main_concept}(a). Supervised methods, on the other hand, essentially seek to find a decision boundary that can separate the normal and collected anomalous samples. However, since the collected anomalies only represent a fraction of all anomaly types, when a anomaly of unseen type is presented, the classifier may not be able to classify it correctly, as illustrated in Fig. \ref{main_concept}(b). 

To leverage the incompletely collected anomalies, a semi-supervised anomaly detection method SSAD is proposed in \cite{SSAD}, which learns a compact hyper-sphere that encompasses the normal samples, while excluding the collected anomalies outside of it. In Deep SAD \cite{DeepSAD}, a deep neural network is trained to encourage the learned representations of normal samples gathering towards a common center, while those of anomalies moving away from it, increasing the discrimination between normal and anomalous samples' representations. Obviously, both methods are established on a good distance metric, which, however, is often very difficult to be found, especially in the high-dimensional data space like images. Recently, \cite{aad2020} proposed to incorporate the information of anomalies identified by an expert into a basic anomaly detector through active learning framework. However, the active learning approach requires the collected anomalies to be processed one by one, losing the merits of batch processing. Moreover, to integrate with the active learning, the basic anomaly detector used in \cite{aad2020} is relatively simple, making it not suitable to be used in high-dimensional data.

In this paper, we argue that the most natural way to address this problem is to approach it from the perspective of probabilistic distribution learning. Under this view, the problem can be reformulated as learning a distribution that can not only model the normal samples well, but also ensure low density values for the collected anomalies, as illustrated in Fig. \ref{main_concept}(c). Inspired by recent successes of deep generative models \cite{GAN2014}, we propose to ground this problem on them. But different from previous generative models, we developed a generative model that can not only capture the distribution of samples from the normal dataset, but also avoid to assign probabilities for samples in the abnormal dataset. Specifically, an anomaly-aware generative adversarial network (GAN) is developed, which is able to explicitly avoid generating samples that look like the training anomalies, apart from the basic capability of generating normal samples. Moreover, due to the high computational cost of directly evaluating the density value, following the unsupervised anomaly detection methods based on GANs \cite{ALAD18}, we make the proposed anomaly-aware GAN to be bidirectional, too. With the bidirectional structure, surrogate metrics ({\it e.g.}, reconstruction error) can be easily calculated to assess the abnormality of a new sample. Extensive experiments were conducted to evaluate the performance of the proposed method. The results demonstrate that the proposed method is able to exploit the collected anomalies to boost the detection performance effectively, and outperforms the comparable baselines by a remarkable margin.

\section{Preliminaries of GAN-Based Unsupervised Anomaly Detection}
One of the main approach for unsupervised anomaly detection is to estimate the probability distribution of normal data. Generative adversarial networks (GANs) \cite{GAN2014}, known for their superior capability of modeling the distribution of high-dimensional data like images, have been applied to detect anomalies. In \cite{AnoGAN17}, a vanilla GAN is trained to model the normal data by playing a min-max game
\begin{align}
	\min_G \max_D V(D, G) \!=  & {\mathbb{E}}_{{\mathbf{x}}\sim p_{data}({\mathbf{x}})} \left[\log D({\mathbf{x}})\right] \nonumber \\
	& + {\mathbb{E}}_{{\mathbf{z}}\sim p({\mathbf{z}})} \left[\log \left(1- D(G({\mathbf{z}}))\right)\right],
\end{align}
where $G(\cdot)$ and $D(\cdot)$ represent the generator and discriminator, respectively; $p_{data}({\mathbf{x}})$ and $p({\mathbf{z}})$ are the distribution of normal data and standard Gaussian distribution, respectively. After training, a joint distribution $p({\mathbf{x}}, {\mathbf{z}})$ over the data and latent code is obtained. However, due to the prohibitive computational complexity involved in the integration $p({\mathbf{x}}) = \int{p({\mathbf{x}}, {\mathbf{z}}) d{\mathbf{z}}}$, the density $p({\mathbf{x}})$ cannot be used to detect anomalies directly. Instead, \cite{AnoGAN17} proposed to use gradient descent methods to find a latent code ${\mathbf{\hat z}}$ that can explain the data ${\mathbf{x}}$ in the latent space, and then use the reconstruction error between original data ${\mathbf{x}}$ and the recovered data $G({\mathbf{\hat z}})$ to assess the abnormality, that is, $\left\|{\mathbf{x}} - G({\mathbf{\hat z}})\right\|^2$. Some other surrogate criteria are also proposed, {\it e.g.}, leveraging the learned discriminator \cite{EGBAD2017,NDA21}. Since the discriminator is not trained to distinguish anomalies from normal data, these criteria generally do not perform as well as the reconstruction error.

\paragraph{Bidirectional-GAN-Based Methods} To reduce the computational cost of searching the latent code, bidirectional GANs \cite{ALI16,ALICE17,BiGAN17} were later used for anomaly detection, in which the associated encoder can output the latent codes directly. For a bidirectional GAN, it includes two joint distributions, the encoder-induced and generator-induced joint distribution
\begin{align}
	p_E({\mathbf{x}}, {\mathbf{z}}) &= p_E({\mathbf{z}}|{\mathbf{x}})p_{data}({\mathbf{x}}), \\
	p_G({\mathbf{x}}, {\mathbf{z}}) &= p_G({\mathbf{x}}|{\mathbf{z}})p({\mathbf{z}}),
\end{align}
where $p_E({\mathbf{z}}|{\mathbf{x}})$ and $p_G({\mathbf{x}}|{\mathbf{z}})$ represent the encoder $E(\cdot)$ and generator $G(\cdot)$, respectively, both of which are parameterized by neural networks. It is proved in \cite{ALI16} that $p_E({\mathbf{x}}, {\mathbf{z}})$ and $p_G({\mathbf{x}}, {\mathbf{z}})$ will converge to the same distribution by playing the following min-max game
\begin{align} \label{obj_BiGAN}
	\min_{G, E} \max_D V\!(D, \!G, \!E)\!\! &=  {\mathbb{E}}_{({\mathbf{x}}, {\mathbf{z}}) \sim p_{E}({\mathbf{ x}}, {\mathbf{ z}})} \left[\log D({\mathbf{x}}, {\mathbf{z}})\right] \nonumber \\
	& \!\!+\! {\mathbb{E}}_{({\mathbf{ x}}, {\mathbf{ z}})\sim p_G({\mathbf{x}}, {\mathbf{z}})} \!\! \left[\log \! \left(1 \!\!- \!\! D({\mathbf{x}}, {\mathbf{z}})\right)\right].
\end{align}
Since $p_E({\mathbf{x}}, {\mathbf{z}})$ is defined as $p_E({\mathbf{x}}, {\mathbf{z}}) = p_E({\mathbf{z}}|{\mathbf{x}})p_{data}({\mathbf{x}})$ and $p_G({\mathbf{x}}, {\mathbf{z}}) = p_E({\mathbf{x}}, {\mathbf{z}})$ holds after convergence, it can be easily seen that the marginal distribution of $p_G({\mathbf{x}}, {\mathbf{z}})$ {\it w.r.t.} ${\mathbf{x}}$ must be $p_{data}({\mathbf{x}})$, and  $p_E({\mathbf{z}}|{\mathbf{x}})$ can be viewed as the inference network of $p_G({\mathbf{x}}, {\mathbf{z}})$. Thus, for a sample ${\mathbf{x}}$,  we can use the inference network $p_E({\mathbf{z}}|{\mathbf{x}})$ to output its latent code ${\mathbf{\hat z}}$ and then use ${\mathbf{\hat z}}$ to compute the reconstruction error.

\section{Anomaly Detection with Anomaly-Aware Bidirectional GANs}

\subsection{Problem Description}

To describe the problem clearly, we define two datasets
\begin{align}
	{\mathcal{X^+}} &\triangleq \{{\mathbf{x}}_1^+, {\mathbf{x}}_2^+, \cdots, {\mathbf{x}}_n^+\},\\
	{\mathcal{X}}^- &\triangleq \{{\mathbf{x}}_1^-, {\mathbf{x}}_2^-, \cdots, {\mathbf{x}}_m^-\},
\end{align}
where ${\mathcal{X}}^+$ and ${\mathcal{X}}^-$ denote the set of normal samples and collected anomalies, respectively. According to the characteristics of anomaly detection tasks, we assume that the normal population can be sufficiently represented by the dataset ${\mathcal{X^+}}$, while because of the extreme diversity of anomalies,  ${\mathcal{X}}^-$ only contain a fraction of anomaly types and cannot be used to represent the whole abnormal population. The problem interested in this paper is to judge a testing sample ${\mathbf{x}}$ is anomalous or not based on the two given datasets ${\mathcal{X}}^+$ and ${\mathcal{X}}^-$. Unsupervised detection methods only leverage the normal dataset ${\mathcal{X}}^+$, while supervised ones leverage both but assume that ${\mathcal{X}}^-$ can represent the entire anomaly population. Both of the methods are not suitable for the considered circumstance. Note that the dataset ${\mathcal{X^+}}$ is assumed to be only composed of normal samples in the analyses below, but we will show experimentally that it could be mixed with some anomalies, leading to only a slight performance drop.

\subsection{Enabling Anomaly-Awareness for Bidirectional GANs under Disjoint Supports}
Deep generative models have been successfully applied to detect anomalies by learning the distribution of normal samples in ${\mathcal{X}}^+$, but none of them make use of the collected anomalies in ${\mathcal{X}}^-$. In this section, to leverage the available anomalous information given by in ${\mathcal{X}}^-$, an \textbf{A}nomaly-\textbf{A}ware \textbf{Bi}directional \textbf{GAN} (AA-BiGAN) is developed, which can explicitly avoid to assign probabilities for samples in ${\mathcal{X}}^-$. To this end, we first transform the traditional bidirectional GAN described in \eqref{obj_BiGAN} into the framework of least-square GAN (LSGAN) \cite{LSGAN17}, which using least-square loss to realize the updating equations. It is known that the optimization objective \eqref{obj_BiGAN} aims to drive the output of discriminator $D(\cdot)$ towards 1 and 0 for samples from $p_E({\mathbf{x}}, {\mathbf{z}})$ and $p_G({\mathbf{x}}, {\mathbf{z}})$, respectively, while encouraging the generator $p_G({\mathbf{x}}|{\mathbf{z}})$ and encoder $p_E({\mathbf{z}}|{\mathbf{x}})$ to confuse the discriminator by 
driving it to output 0.5. Under the LSGAN framework, least-square loss is used to replace the cross-entropy loss in \eqref{obj_BiGAN}, leading to the following updating rules
\begin{align} \label{Discrim_BiLSGAN}
	\min_D V(D) & = {\mathbb{E}}_{({\mathbf{x}}, {\mathbf{z}})\sim p_E({\mathbf{x}}, {\mathbf{z}})}\left[(D({\mathbf{x}}, {\mathbf{z}}) - 1)^2\right] \nonumber \\
	&\quad +  {\mathbb{E}}_{({\mathbf{x}}, {\mathbf{z}}) \sim p_G({\mathbf{x}}, {\mathbf{z}})}\left[(D({\mathbf{x}}, {\mathbf{z}}) - 0)^2\right],
\end{align}
\begin{align} \label{GE_BiLSGAN}
	\min_{G, E} V(G, E) & = {\mathbb{E}}_{({\mathbf{x}}, {\mathbf{z}})\sim p_E({\mathbf{x}}, {\mathbf{z}})}\left[\left(D({\mathbf{x}}, {\mathbf{z}}) - 0.5 \right)^2\right] \nonumber \\
	&\quad +  {\mathbb{E}}_{({\mathbf{x}}, {\mathbf{z}}) \sim p_G({\mathbf{x}}, {\mathbf{z}})}\!\! \left[\left(D({\mathbf{x}}, {\mathbf{z}}) - 0.5 \right)^2\right],
\end{align}
where the generator $G$ and encoder $E$ refer to the conditional distribution $p({\mathbf{x}}|{\mathbf{z}})$ and $p({\mathbf{z}}|{\mathbf{x}})$, respectively; and $p_G({\mathbf{x}}, {\mathbf{z}})=p_G({\mathbf{x}}|{\mathbf{z}})p({\mathbf{z}})$ and $p_E({\mathbf{x}}, {\mathbf{z}})= p_E({\mathbf{z}}|{\mathbf{x}})p_{data}({\mathbf{x}})$. Following similar proofs in LSGAN, we can obtain the lemma.

\begin{lemma}
	\label{lemma:1}
	If the discriminator $D$, generator $G$ and encoder $E$ are updated according to \eqref{Discrim_BiLSGAN} and \eqref{GE_BiLSGAN}, after convergence, the joint distributions $p_G({\mathbf{x}}, {\mathbf{z}})$ and $p_E({\mathbf{x}}, {\mathbf{z}})$ will be equal, that is, $p_G({\mathbf{x}}, {\mathbf{z}}) = p_E({\mathbf{x}}, {\mathbf{z}})$.
\end{lemma}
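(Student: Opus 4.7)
The plan is to mirror the LSGAN convergence proof of Mao et al., adapted to the bidirectional setting where the discriminator takes the joint variable $(\mathbf{x},\mathbf{z})$ rather than only $\mathbf{x}$. Because equation \eqref{GE_BiLSGAN} uses the symmetric label $0.5$ against both expectations, the analysis should reduce to showing that the induced divergence between $p_E(\mathbf{x},\mathbf{z})$ and $p_G(\mathbf{x},\mathbf{z})$ vanishes only at equality.

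First I would fix $G$ and $E$ and solve pointwise for the optimal discriminator $D^*$ from \eqref{Discrim_BiLSGAN}. Writing the integrand as $p_E(\mathbf{x},\mathbf{z})(D-1)^2 + p_G(\mathbf{x},\mathbf{z})\,D^2$ and setting the derivative with respect to $D(\mathbf{x},\mathbf{z})$ to zero gives
\begin{equation*}
D^{*}(\mathbf{x},\mathbf{z}) \;=\; \frac{p_E(\mathbf{x},\mathbf{z})}{p_E(\mathbf{x},\mathbf{z}) + p_G(\mathbf{x},\mathbf{z})},
\end{equation*}
which is the unique minimizer wherever the denominator is positive (the value elsewhere is irrelevant to the integrals).

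Next I would substitute $D^{*}$ into \eqref{GE_BiLSGAN}. Since both terms in \eqref{GE_BiLSGAN} share the same target $0.5$, they combine into a single integral against $p_E + p_G$:
\begin{equation*}
V(G,E)\;=\;\int (p_E+p_G)\!\left(\frac{p_E}{p_E+p_G}-\tfrac12\right)^{\!2}\!\!d\mathbf{x}\,d\mathbf{z}\;=\;\frac{1}{4}\!\int\!\frac{(p_E-p_G)^2}{p_E+p_G}\,d\mathbf{x}\,d\mathbf{z}.
\end{equation*}
This quantity is the triangular discrimination (a symmetric $\chi^2$-type divergence), manifestly nonnegative, and zero if and only if $p_E(\mathbf{x},\mathbf{z}) = p_G(\mathbf{x},\mathbf{z})$ almost everywhere. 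Conversely, when the two joint distributions coincide, both expectations in \eqref{GE_BiLSGAN} equal $(0.5-0.5)^2=0$ under the optimal $D^{*}\equiv 0.5$, so the global minimum is attained, establishing the lemma.

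The main obstacle is not algebraic but conceptual: the proof requires invoking the standard alternating assumption that $D$ reaches its functional optimum before $(G,E)$ are updated, and then arguing at the level of densities rather than for the specific neural parameterization. I would flag this caveat explicitly, noting that the argument is identical in structure to LSGAN and BiGAN, with the only novelty being that the discriminator's domain is the joint space, which does not affect the pointwise minimization step.
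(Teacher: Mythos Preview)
Your proposal is correct and follows essentially the same route as the paper's own proof: derive the pointwise optimal discriminator $D^*=p_E/(p_E+p_G)$, substitute into \eqref{GE_BiLSGAN}, and reduce $V(G,E)$ to $\tfrac{1}{4}\int (p_E-p_G)^2/(p_E+p_G)$, which vanishes iff $p_E=p_G$. The only cosmetic difference is that the paper names this quantity the Pearson $\chi^2$ divergence $\chi^2_{\mathrm{Pearson}}(p_E+p_G\,\|\,2p_G)$ whereas you call it the triangular discrimination; the algebra and conclusion are identical.
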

\begin{proof}
	Please refer to the Supplementary Materials.
\end{proof}
\vspace{-1.0mm}
Although it is known that the marginal of joint distribution $p_G({\mathbf{x}}, {\mathbf{z}})$ is equal to the training data distribution $p_{data}({\mathbf{x}})$ after convergence, the method still lacks the ability of explicitly avoiding to assign probabilities for samples in ${\mathcal{X}}^-$. To achieve this goal, we modify the updating rules in \eqref{Discrim_BiLSGAN} and \eqref{GE_BiLSGAN} by adding an anomaly-relevant term
\begin{align} \label{discrim_anoAwareGAN}
	\min_D V(D) & = {\mathbb{E}}_{({\mathbf{x}}, {\mathbf{z}})\sim p_E^+({\mathbf{x}}, {\mathbf{z}})}\left[(D({\mathbf{x}}, {\mathbf{z}}) - 1)^2\right] \nonumber \\
	&\quad + {\mathbb{E}}_{({\mathbf{x}}, {\mathbf{z}}) \sim p_E^-({\mathbf{x}}, {\mathbf{z}})}\left[(D({\mathbf{x}}, {\mathbf{z}}) - 0)^2\right] \nonumber \\
	&\quad +  {\mathbb{E}}_{({\mathbf{x}}, {\mathbf{z}}) \sim p_G({\mathbf{x}}, {\mathbf{z}})}\left[(D({\mathbf{x}}, {\mathbf{z}}) - 0)^2\right],
\end{align}
\begin{align} \label{Gen_anoAwareGAN}
	\min_{G, E} V(G, E) & = {\mathbb{E}}_{({\mathbf{x}}, {\mathbf{z}})\sim p_E^+({\mathbf{x}}, {\mathbf{z}})}\left[\left(D({\mathbf{x}}, {\mathbf{z}}) - 0.5 \right)^2\right] \nonumber \\
	&\quad + \! {\mathbb{E}}_{({\mathbf{x}}, {\mathbf{z}})\sim p_E^-({\mathbf{x}}, {\mathbf{z}})}\!\! \left[\left(D({\mathbf{x}}, {\mathbf{z}}) - 0.5 \right)^2\right] \nonumber \\
	&\quad +\!  {\mathbb{E}}_{({\mathbf{x}}, {\mathbf{z}}) \sim p_G({\mathbf{x}}, {\mathbf{z}})}\!\! \left[\left(D({\mathbf{x}}, {\mathbf{z}}) \!- \! 0.5 \right)^2\right],
\end{align}
where the distributions 
\begin{align}
	p_E^+({\mathbf{x}}, {\mathbf{z}}) &\triangleq p_E({\mathbf{z}}|{\mathbf{x}})p_{{\mathcal{X}}}^+({\mathbf{x}}), \\
	 p_E^-({\mathbf{x}}, {\mathbf{z}}) &\triangleq p_E({\mathbf{z}}|{\mathbf{x}})p_{{\mathcal{X}}}^-({\mathbf{x}}); 
\end{align}
$p_{\mathcal{X}}^+({\mathbf{x}})$ and $p_{\mathcal{X}}^-({\mathbf{x}})$ represent the distributions of samples in ${\mathcal{X}}^+$ and ${\mathcal{X}}^-$, respectively.
Although the exact expressions of $p_{\mathcal{X}}^+({\mathbf{x}})$ and $p_{\mathcal{X}}^-({\mathbf{x}})$ are not known, we can easily draw samples from them. The optimal discriminator is obtained
\begin{equation} \label{optimaldiscrim_anoAwareGAN}
	D^*({\mathbf{x}}, {\mathbf{z}}) = \frac{p_E^+({\mathbf{x}}, {\mathbf{z}})}{p_E^+({\mathbf{x}}, {\mathbf{z}})+p_E^-({\mathbf{x}}, {\mathbf{z}})+p_G({\mathbf{x}}, {\mathbf{z}})}.
\end{equation}
By substituting the optimal discriminator $D^*({\mathbf{x}}, {\mathbf{z}})$ into \eqref{Gen_anoAwareGAN}, it can be easily verified that under the prerequisite of disjoint support  $Supp(p_{\mathcal{X}}^+({\mathbf{x}})) \cap Supp(p_{\mathcal{X}}^-({\mathbf{x}}))=\emptyset$, the optima is achieved if and only if $p_E^+({\mathbf{x}}, {\mathbf{z}})=p_G({\mathbf{x}}, {\mathbf{z}})$. Therefore, we have the following theorem
\begin{theorem}
	\label{lemma_disjoint}
	Suppose $Supp(p_{\mathcal{X}}^+({\mathbf{x}}) \cap Supp(p_{\mathcal{X}}^-({\mathbf{x}}))=\emptyset$. If the discriminator $D$, generator $G$ and encoder $E$ are updated according to \eqref{discrim_anoAwareGAN} and \eqref{Gen_anoAwareGAN}, after convergence, the two distributions $p_G({\mathbf{x}}, {\mathbf{z}})$ and $p_E^+({\mathbf{x}}, {\mathbf{z}})$ will be equal, that is, $p_G({\mathbf{x}}, {\mathbf{z}}) = p_E^+({\mathbf{x}}, {\mathbf{z}})$.
\end{theorem}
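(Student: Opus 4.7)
The plan is to adapt the LSGAN-style argument used for Lemma~\ref{lemma:1} so as to accommodate the extra ``negative'' term in \eqref{discrim_anoAwareGAN} and \eqref{Gen_anoAwareGAN}. I would first obtain the pointwise optimal discriminator by differentiating the integrand of \eqref{discrim_anoAwareGAN} with respect to $D({\mathbf{x}}, {\mathbf{z}})$ and setting the derivative to zero, which reproduces the $D^*$ already displayed in \eqref{optimaldiscrim_anoAwareGAN}. Substituting $D^*$ into \eqref{Gen_anoAwareGAN} is the key algebraic step: because each of the three expectations is weighted by the same factor $(D^* - 0.5)^2$, they merge into a single integral
\begin{equation*}
V(G, E) = \int \frac{\bigl(p_E^+ - p_E^- - p_G\bigr)^2}{4\,\bigl(p_E^+ + p_E^- + p_G\bigr)}\, d{\mathbf{x}}\, d{\mathbf{z}},
\end{equation*}
where I suppress the $({\mathbf{x}}, {\mathbf{z}})$ dependence for brevity. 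This reduction is precisely what the $0$/$0.5$/$1$ labelling of LSGAN is designed to enable.

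Next I would invoke the disjoint-support hypothesis. Since $p_E^{\pm}({\mathbf{x}}, {\mathbf{z}}) = p_E({\mathbf{z}}|{\mathbf{x}})\, p_{\mathcal{X}}^{\pm}({\mathbf{x}})$ inherits its ${\mathbf{x}}$-support from $p_{\mathcal{X}}^{\pm}$, the supports of $p_E^+$ and $p_E^-$ in the joint space are also disjoint. Partitioning the integration domain into $A = Supp(p_E^+) \setminus Supp(p_E^-)$, $B = Supp(p_E^-) \setminus Supp(p_E^+)$, and the remaining set $C$, the integrand simplifies to $(p_E^+ - p_G)^2 / \bigl[4(p_E^+ + p_G)\bigr]$ on $A$, to $(p_E^- + p_G)/4$ on $B$, and to $p_G/4$ on $C$. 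All three are nonnegative, and their $p_G$-dependent parts are separately minimized by $p_G = p_E^+$ on $A$ together with $p_G = 0$ on $B \cup C$, i.e.\ by the global identity $p_G({\mathbf{x}}, {\mathbf{z}}) = p_E^+({\mathbf{x}}, {\mathbf{z}})$. This candidate is a legitimate probability density because $p_E^+$ integrates to $1$, and the remaining minimum value $\int_B p_E^-/4\, d{\mathbf{x}}\, d{\mathbf{z}} = 1/4$ is independent of the choice of $E$, so every optimal $(G, E)$ satisfies the claimed equality and, conversely, any $(G, E)$ satisfying it is optimal.

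The main obstacle, and the reason the disjoint-support hypothesis is indispensable, lies in the $B$ region: its $(p_E^- + p_G)/4$ contribution cannot be driven to zero and always leaves an irreducible constant $1/4$ in $V$. One therefore cannot conclude by the usual recipe of demanding the integrand vanish pointwise, but must minimize with respect to $p_G$ alone with $p_E^-$ treated as fixed. Without disjoint support the numerator $p_E^+ - p_E^-$ could cancel on the overlap in many different ways, and both the uniqueness and the clean identification $p_G = p_E^+$ would break down; tracking this region carefully is the only delicate part of the argument.
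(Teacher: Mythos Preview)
Your proposal is correct and follows essentially the same route as the paper's proof: derive the optimal discriminator \eqref{optimaldiscrim_anoAwareGAN}, substitute it into $V(G,E)$, exploit the disjoint-support hypothesis to split the integral into regions where $p_E^-$ or $p_E^+$ vanishes, and observe that each piece is separately minimized by $p_G = p_E^+$ with an irreducible residual of $1/4$ coming from the anomaly region. The only cosmetic differences are that the paper splits the domain before combining the three expectations (and names the $A$-region term as a Pearson $\chi^2$ divergence), whereas you first collapse everything into the single integrand $(p_E^+ - p_E^- - p_G)^2 / [4(p_E^+ + p_E^- + p_G)]$ and then partition into three regions $A$, $B$, $C$; your explicit remark that the residual $1/4$ is independent of $E$ is a nice addition the paper leaves implicit.
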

\begin{proof}
	Please refer to Supplementary Materials.
\end{proof}
\vspace{-1.0mm}
From Theorem \ref{lemma_disjoint}, it can be seen that due to $p_E^+({\mathbf{x}}, {\mathbf{z}}) \triangleq p({\mathbf{z}}|{\mathbf{x}})p_{{\mathcal{X}}}^+({\mathbf{x}})$, the marginal of generative distribution $p_G({\mathbf{x}}, {\mathbf{z}})$ will converge to $p_{\mathcal{X}}^+({\mathbf{x}})$, which is the same as previous GANs when  $p_{data}({\mathbf{x}})$ is set as $p_{\mathcal{X}}^+({\mathbf{x}})$. Although the new updating rules do not change the converged distribution, they bring us a different optimal discriminator as shown in \eqref{optimaldiscrim_anoAwareGAN}, which is aware of the distribution of collected anomalies. To see this, let us examine the training dynamics of the generator. Suppose that a sample $({\mathbf{\tilde x}}, {\mathbf{\tilde z}})$ is generated from the generative model $p_G({\mathbf{x}}, {\mathbf{z}})$. If the sample $({\mathbf{\tilde x}}, {\mathbf{\tilde z}})$ looks similar to the collected anomalies, that is, $p_{\mathcal{X}}^-({\mathbf{\tilde x}}, {\mathbf{\tilde z}})\gg p_{\mathcal{X}}^+({\mathbf{\tilde x}}, {\mathbf{\tilde z}})$, according to the discriminator in \eqref{optimaldiscrim_anoAwareGAN}, its output value must be very small {\it i.e.}, close to $0$. Since the target value of discriminator is 0.5 for the training of generator, the significant discrepancy between the two values will push the generator quickly moving away from current state. Therefore, the discriminator \eqref{optimaldiscrim_anoAwareGAN} is able to prevent the generator from generating anomaly-like samples. By contrast, it can be easily shown that when setting $p_{data}({\mathbf{x}})$ as $p_{\mathcal{X}}^+({\mathbf{x}})$, the optimal discriminator of previous GANs obtained by \eqref{obj_BiGAN} or \eqref{Discrim_BiLSGAN} and \eqref{GE_BiLSGAN} is
\begin{equation} \label{optimaldiscrim_AwareGAN}
	\widetilde D^*({\mathbf{x}}, {\mathbf{z}}) = \frac{p_E^+({\mathbf{x}}, {\mathbf{z}})}{p_E^+({\mathbf{x}}, {\mathbf{z}})+p_G({\mathbf{x}}, {\mathbf{z}})},
\end{equation}
which does not include the anomaly-relevant term $p_E^-({\mathbf{x}}, {\mathbf{z}})$ in the discriminator. Without this term, even if a sample that looks similar to anomalies is generated, the discriminator is not guaranteed to output a small value. As a result, the discriminator $\widetilde D(\cdot)$ in \eqref{optimaldiscrim_AwareGAN} does not have the capability of explicitly preventing the generator from generating anomaly-like samples. Although theoretically the marginal of both generative distributions, no matter which discriminator is used, will converge to $p_{\mathcal{X}}^+({\mathbf{x}})$ eventually, this is established on the assumptions of infinite training data and infinite representational capacities of neural networks. However, none of these conditions can be satisfied in practice, making the learned distribution only be an approximate of the ideal distribution $p_{\mathcal{X}}^+({\mathbf{x}})$. Therefore, the collected anomalies are possibly assigned with some non-negligible probabilities in the learned distribution. However, if the anomaly-aware discriminator \eqref{optimaldiscrim_anoAwareGAN} is used, this kind of possibilities could be reduced significantly, thanks to the model's capability of explicitly avoiding to generate anomaly-like samples.

\subsection{Relaxing the Prerequisites}
The derivation of anomaly-aware bidirectional GANs in the previous section hinges on the prerequisite of disjoint supports for distributions of normal and anomalous samples $p_{\mathcal{X}}^+({\mathbf{x}})$ and $p_{\mathcal{X}}^-({\mathbf{x}})$, that is, $Supp(p_{\mathcal{X}}^+({\mathbf{x}})) \cap Supp(p_{\mathcal{X}}^-({\mathbf{x}}))=\emptyset$. However, the disjoint condition may not always hold in practice, especially when the anomalies and normal samples share lots of commonalities. In this section, we show that this prerequisite can be removed by adjusting the target values of discriminator. Moreover, we can also show that the target values do not need to be restricted to some fixed values, but could have lots of feasible configurations. To be specific, we have the following theorem.
\begin{theorem}
	\label{lemma_GenerousanoAwere}
	If the discriminator $D$, generator $G$ and encoder $E$ are updated according to
	\begin{align} \label{discrim_GeneralanoAwareGAN}
		\min_D V(D) & = {\mathbb{E}}_{({\mathbf{x}}, {\mathbf{z}})\sim p_E^+({\mathbf{x}}, {\mathbf{z}})}\left[(D({\mathbf{x}}, {\mathbf{z}}) - a)^2\right] \nonumber \\
		&\quad + \! {\mathbb{E}}_{({\mathbf{x}}, {\mathbf{z}}) \sim p_E^-({\mathbf{x}}, {\mathbf{z}})}\!\! \left[\!\!\left(\!D({\mathbf{x}}, {\mathbf{z}}) \!-\! \frac{a\!+\!b}{2}\!\right)^2\!\right] \nonumber \\
		&\quad +  {\mathbb{E}}_{({\mathbf{x}}, {\mathbf{z}}) \sim p_G({\mathbf{x}}, {\mathbf{z}})}\left[(D({\mathbf{x}}, {\mathbf{z}}) - b)^2\right],
	\end{align}
	\begin{align} \label{Gen_GeneralanoAwareGAN}
		\min_{G, E} V(G, E) & = {\mathbb{E}}_{({\mathbf{x}}, {\mathbf{z}})\sim p_E^+({\mathbf{x}}, {\mathbf{z}})}\left[\left(D({\mathbf{x}}, {\mathbf{z}}) - c \right)^2\right] \nonumber \\
		&\quad + \! {\mathbb{E}}_{({\mathbf{x}}, {\mathbf{z}})\sim p_E^-({\mathbf{x}}, {\mathbf{z}})}\!\! \left[\left(D({\mathbf{x}}, {\mathbf{z}}) - c \right)^2\right] \nonumber \\
		&\quad +\!  {\mathbb{E}}_{({\mathbf{x}}, {\mathbf{z}}) \sim p_G({\mathbf{x}}, {\mathbf{z}})}\!\! \left[\left(D({\mathbf{x}}, {\mathbf{z}}) \!- \! c \right)^2\right],
	\end{align}
the two distributions $p_G({\mathbf{x}}, {\mathbf{z}})$ and $p_E^+({\mathbf{x}}, {\mathbf{z}})$ after convergence will be equal, that is, $p_G({\mathbf{x}}, {\mathbf{z}}) = p_E^+({\mathbf{x}}, {\mathbf{z}})$.
\end{theorem}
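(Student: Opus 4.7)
The plan is to push the LSGAN-style argument of Lemma \ref{lemma:1} and Theorem \ref{lemma_disjoint} through for the more general targets $a,b,c$, exploiting the fact that the middle target $\frac{a+b}{2}$ is deliberately chosen as the midpoint of $a$ and $b$. First I would point-wise optimize \eqref{discrim_GeneralanoAwareGAN} in $D$ by differentiating the integrand and setting it to zero, yielding the optimal discriminator
\[
D^*({\mathbf{x}},{\mathbf{z}}) \;=\; \frac{a\,p_E^+ + \tfrac{a+b}{2}\,p_E^- + b\,p_G}{p_E^+ + p_E^- + p_G}.
\]
Substituting $D^*$ into \eqref{Gen_GeneralanoAwareGAN} and using that all three terms share a common target $c$, the generator/encoder objective collapses to $V(G,E) = \int (D^*-c)^2\,S\,d{\mathbf{x}}\,d{\mathbf{z}}$, where $S \triangleq p_E^+ + p_E^- + p_G$.

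Now comes the key algebraic step. Setting $\mu \triangleq \tfrac{a+b-2c}{2}$ and $\nu \triangleq \tfrac{b-a}{2}$, I would verify the identity
\[
(a-c)\,p_E^+ + \tfrac{a+b-2c}{2}\,p_E^- + (b-c)\,p_G \;=\; \mu\,S \,+\, \nu\,(p_G - p_E^+),
\]
which hinges precisely on the middle target being the arithmetic mean of the outer two. Squaring, dividing by $S$, integrating, and exploiting $\int p_E^+ = \int p_E^- = \int p_G = 1$ (so the cross term vanishes and $\int S = 3$), I obtain the clean decomposition
\[
V(G,E) \;=\; 3\mu^2 \;+\; \nu^2 \int \frac{\bigl(p_E^+({\mathbf{x}},{\mathbf{z}}) - p_G({\mathbf{x}},{\mathbf{z}})\bigr)^2}{p_E^+ + p_E^- + p_G}\, d{\mathbf{x}}\, d{\mathbf{z}}.
\]
The first term is constant in $G,E$ and the second is a Pearson-$\chi^2$-type nonnegative quantity that vanishes iff $p_G = p_E^+$ almost everywhere, yielding the theorem whenever $a\neq b$.

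I expect the main obstacle to be nothing more than spotting the rearrangement that produces the $\mu S + \nu(p_G-p_E^+)$ form; the rest is standard LSGAN bookkeeping. It is worth emphasizing that the anomaly density $p_E^-$ enters only through the denominator of the $\chi^2$-type term and never through the numerator, which is exactly what removes the disjoint-support hypothesis needed in Theorem \ref{lemma_disjoint}: even when $\mathrm{Supp}(p_{\mathcal{X}}^+)$ and $\mathrm{Supp}(p_{\mathcal{X}}^-)$ overlap, the unique minimizer remains $p_G = p_E^+$. The decomposition also makes the ``feasible configurations'' alluded to in the theorem transparent: any triple with $a\neq b$ is admissible, attaining the minimum value $3\mu^2 = \tfrac{3}{4}(a+b-2c)^2$.
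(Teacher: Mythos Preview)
Your proposal is correct and takes a cleaner route than the paper's own argument. Both proofs start identically: compute the pointwise-optimal discriminator $D^*=\frac{a\,p_E^++\frac{a+b}{2}\,p_E^-+b\,p_G}{p_E^++p_E^-+p_G}$ and substitute into \eqref{Gen_GeneralanoAwareGAN} to obtain $V(G,E)=\int(D^*-c)^2S$ with $S=p_E^++p_E^-+p_G$. From there the paper applies Jensen's inequality with the convex function $\varphi(t)=(t-c)^2$ and the probability measure $\tfrac{1}{3}S$, obtaining the lower bound $V(G,E)\ge 3\varphi\!\big(\tfrac{a+b}{2}\big)$, and then checks by direct substitution that $p_G=p_E^+$ attains it. Your approach instead produces the exact identity $V(G,E)=3\mu^2+\nu^2\int\frac{(p_E^+-p_G)^2}{S}$ via the rearrangement $(D^*-c)S=\mu S+\nu(p_G-p_E^+)$, which is a genuinely different device. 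What this buys you: (i) the argument is purely algebraic, with no appeal to convexity; (ii) uniqueness of the minimizer (for $a\neq b$) is immediate, whereas the paper's write-up only verifies that $p_G=p_E^+$ hits the Jensen bound without spelling out the strict-convexity step needed for uniqueness; and (iii) your decomposition makes explicit that $p_E^-$ appears only in the denominator of the $\chi^2$-type term, which is exactly why the disjoint-support hypothesis of Theorem~\ref{lemma_disjoint} is no longer needed. Both approaches yield the same minimum value $3\mu^2=3\big(\tfrac{a+b}{2}-c\big)^2$, confirming consistency.
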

\begin{proof}
	Please refer to the Supplementary Materials.
\end{proof}
\vspace{-1.0mm}
The proof of this theorem relies on the use of Jensen inequality and is more complex than that of Theorem \ref{lemma_disjoint}. According to this theorem, we are able to obtain a bidirectional generative model that can explicitly avoid to generate anomaly-like samples even if the disjoint supports condition does not hold. For the parameters $a$, $b$ and $c$, theoretically they could be set arbitrary. In the experiments, we follow LSGAN and set $a$ and $b$ as $1$ and $0$, respectively. For the value of $c$, we find that the performance is not sensitive to its value, as demonstrated by the results in Supplementary Materials. We observed that letting it far away from $a$, $b$ and $\frac{a+b}{2}$ generally deliver a slightly better performance, thus we simply set it set as $\frac{3}{4}$ for all experiments.

\subsection{Detection Methods}

With the anomaly-aware bidirectional GAN, we can apply it to identify anomalies for new samples. Due to the difficulties of computing the marginal densities $p_G({\mathbf{x}}) =\int{p_G({\mathbf{x}}, {\mathbf{z}})d{\mathbf{z}}}$, reconstruction error is used as the surrogate criteria to replace the density value, as found in many generative-model based anomaly detection methods \cite{vaerecon,ALAD18}. Specifically, we employ the encoder $p_E({\mathbf{z}}|{\mathbf{x}})$ to produce a latent code ${\mathbf{z}}$ for the sample ${\mathbf{x}}$, and then use the generator to generate a sample ${\mathbf{\hat x}}$ from the code ${\mathbf{z}}$. The error $\left\|{\mathbf{x}} - {\mathbf{\hat x}}\right\|^2$ is then used to indicate the degree of abnormality of input sample ${\mathbf{x}}$. It is widely observed that the vanilla bidirectional GANs do not reconstruct the input well. Thus, to increase their reconstruction ability, as proposed in \cite{ALICE17}, one more discriminator is added to distinguish the pairs between $({\mathbf{x}}, {\mathbf{x}})$ and $({\mathbf{x}}, {\mathbf{\hat x}})$, with the specific expression deferred to the Supplementary Materials. In addition to the reconstruction error, some papers \cite{GANomaly18} also proposed to use the norm of latent code $\left\|{\mathbf{ z}}\right\|$ to detect anomalies by noticing that the codes of normal samples follow a standard normal distribution ${\mathcal{N}}({\mathbf{0}}, {\mathbf{I}})$. Thus, if a code's norm $\left\|{\mathbf{ z}}\right\|$ is large, it is highly suspected of being an anomaly. In practice, we find that both of the two criteria work well, but show some differences on specific datasets. In our experiments, we simply choose the one with better performance on the validation subset for a specific dataset.

\vspace{-1.0mm}
\section{Related work}
\label{related work}
With the recent advancement of deep learning, neural networks have been used to help anomaly detection \cite{pang2021deep}. Among the existing methods, a typical approach is to train a deep model to reconstruct the normal data and then employ the reconstruction error  to detect anomalies \cite{recon2015}. To achieve better and robust performance, variational auto-encoders (VAE) \cite{VAE2014} are further used in \cite{vaerecon}. To increase the distinction between the reconstruction errors of normal and anomalous samples, a memory module \cite{MAMAE} and discriminators \cite{OCGAN19} are added into the autoencoders. Another type of mainstream generative models, generative adversarial networks (GAN), are also widely used for anomaly detection \cite{AnoGAN17}. To obtain the reconstruction error more cheaply, bidirectional GANs \cite{ALI16} are proposed to use in \cite{ALAD18}. In addition to reconstruction-based methods, many methods established on density estimation are also studied, {\it e.g.}, estimating the density distribution of normal samples using energy-based models \cite{DSEBM16} or deep Gaussian mixture models \cite{DAGMM}, and then using the density value to detect anomalies. There are also many other methods that resort to the one-class classifier, which assumes that normal samples can often be encompassed by a compact hypersphere \cite{SVDD} or separated by a hyperplane  \cite{OCSVM}. Deep SVDD \cite{DeepSVDD} is further proposed, which uses neural networks to learn discriminative representations for normal and anomalous samples and then finds a hypersphere to separate them like SVDD.  In recent years, it has been pointed out that it is possible to collect a handful of anomalies before the training in practice. Early methods simply turn this problem into a binary classification problem \cite{SS-DGM}, without considering the incompleteness of the collected anomalies. Differently, SSAD \cite{SSAD} proposed to ground the problem on unsupervised detection method SVDD, while ensuring the available anomalous samples are outside of the hypersphere. Later, deep SAD \cite{DeepSAD} proposed to learn a mapping function that encourages the representations of normal samples gathering toward a center, while those of collected anomalies moving away from it. Although both methods achieve superior performance, these methods heavily rely on the use of a good distance metric, which is often difficult to be found in high-dimensional data. Recently, \cite{aad2020} seeks to use active learning to incorporate the anomalies identified by an expert into the existing basic anomaly detector. \cite{pang2021toward} proposed a reinforcement-learning-based method to actively seek novel types of anomalies. However, the architecture of basic anomaly detectors employed in both methods is relatively simple, impeding them from being applied to high-dimensional data with complex structure like images.

\section{Experiments}
\subsection{Experimental Setups}

\paragraph{Datasets} Following the setups in Deep SAD \cite{DeepSAD}, we first evaluate the detection performance of our method on three image datasets. i) \emph{MNIST}: A dataset consisting of 60000 training and 10000 testing handwritten digit images of 28$\times$28  from 10 classes. ii) \emph{F-MNIST}: A dataset consisting of 60000 training and 10000 testing fashion images of 28$\times$28 from 10 classes.  iii) \emph{CIFAR10}: A dataset consisting of 50000 training and 10000 testing images from ten classes. Then, six classic anomaly detection datasets are used to further evaluate the performance of proposed method.

\paragraph{Training \& Evaluation} Following the paper \cite{DeepSAD}, for each dataset, we select one category as normal, while treating the remaining nine types as anomalies. To mimic the circumstance of incomplete anomalous information, a proportion of samples from one of the nine anomalous categories have been collected. The proposed model is trained on the normal samples and collected anomalies. The testing dataset is splitted into a validation and testing dataset with a ratio of $20\%$ and $80\%$. The hyperparameters are fine-tuned on the validation dataset, please refer to the Supplementary Material for more details of training. The area under the receiver operating characteristic curve (AUROC) is employed as the performance criteria. The reported experimental results are averaged over 90 experiments ({\it i.e.,} 10 choices of normal categories $\times$ 9 choices of anomalous categories).

\begin{table*}[!htb]
	\centering
    \small
    \setlength\tabcolsep{7pt}
    \renewcommand\arraystretch{0.94}
	{\begin{tabular}{ccccccccc}
			\toprule
			\makecell[c]{\\Data} & \makecell[c]{\\$\gamma_l$} & \makecell[c]{SSAD\\ Raw~ } & \makecell[c]{SSAD\\ Hybrid~} & \makecell[c]{\\SS-DGM} &\makecell[c]{Supervised\\ Classifier} & \makecell[c]{Deep\\ SAD~}  &\makecell[c]{\\NDA}&  \makecell[c]{AA-BiGAN \\ (ours)}\\
			\cmidrule(lr){1-1} \cmidrule(lr){2-2} \cmidrule(lr){3-3} \cmidrule(lr){4-4} \cmidrule(lr){5-5} \cmidrule(lr){6-6} \cmidrule(lr){7-7} \cmidrule(lr){8-8} 
			\cmidrule(lr){9-9} 
			& .00 & 96.0 $\pm$ 2.9& \textbf{96.3} $\pm$ \textbf{2.5} &  n.a. & n.a. & 92.8 $\pm$ 4.9 & 86.5 $\pm$ 9.5  & \textbf{96.3 $\pm$ 2.8}  \\
			& .01 &96.6 $\pm$ 2.4& 96.8 $\pm$ 2.3 & 89.9 $\pm$ 9.2 &92.8 $\pm$ 5.5 & 96.4 $\pm$ 2.7 & 96.5 $\pm$ 3.7  &\textbf{97.4 $\pm$ 2.1} \\
			MNIST & .05 &93.3 $\pm$ 3.6& 97.4 $\pm$ 2.0 & 92.2 $\pm$ 5.6 & 94.5 $\pm$ 4.6  & 96.7 $\pm$ 2.4 & 96.8 $\pm$ 3.2 & \textbf{97.8 $\pm$ 2.0} \\
			& .10 &90.7 $\pm$ 4.4 & 97.6 $\pm$ 1.7 & 91.6 $\pm$ 5.5 & 95.0 $\pm$ 4.7 & 96.9 $\pm$ 2.3 & 96.9 $\pm$ 3.0 & \textbf{98.4 $\pm$ 1.7} \\
			& .20 & 87.2 $\pm$ 5.6  &97.8 $\pm$ 1.5 & 91.2 $\pm$ 5.6 & 95.6 $\pm$ 4.4 & 96.9 $\pm$ 2.4 & 97.1 $\pm$ 2.9 & \textbf{98.2 $\pm$ 1.6} \\
			\cmidrule(lr){1-1} \cmidrule(lr){2-2} \cmidrule(lr){3-3} \cmidrule(lr){4-4} \cmidrule(lr){5-5} \cmidrule(lr){6-6} \cmidrule(lr){7-7} \cmidrule(lr){8-8}  \cmidrule(lr){9-9} 
			& .00 &92.8 $\pm$ 4.7 & 91.2 $\pm$ 4.7 & n.a. & n.a. & 89.2 $\pm$ 6.2 & 82.7 $\pm$ 11.4 & 
			\textbf{93.0 $\pm$ 4.8} \\
			& .01 &92.1 $\pm$ 5.0 &89.4 $\pm$ 6.0 & 65.1 $\pm$ 16.3 & 74.4 $\pm$ 13.6 & 90.0 $\pm$ 6.4 & 90.1 $\pm$ 8.5 & \textbf{94.4 $\pm$ 5.0} \\
			F-MNIST & .05  &88.3 $\pm$ 6.2& 90.5 $\pm$ 5.9 & 71.4 $\pm$ 12.7 & 76.8 $\pm$ 13.2 & 90.5 $\pm$ 6.5 & 91.0 $\pm$ 7.1 & \textbf{94.6 $\pm$ 4.2} \\
			& .10 &85.5 $\pm$ 7.1 &91.0 $\pm$ 5.6 & 72.9 $\pm$ 12.2 & 79.0 $\pm$ 12.3 & 91.3 $\pm$ 6.0 & 91.4 $\pm$ 7.0 & \textbf{94.7} $\pm$ \textbf{4.3} \\
			& .20 & 82.0 $\pm$ 8.0 & 89.7 $\pm$ 6.6 & 74.7 $\pm$ 13.5 & 81.4 $\pm$ 12.0 & 91.0 $\pm$ 5.5  & 91.4 $\pm$ 7.1 & \textbf{94.8 $\pm$ 4.1} \\
			\cmidrule(lr){1-1} \cmidrule(lr){2-2} \cmidrule(lr){3-3} \cmidrule(lr){4-4} \cmidrule(lr){5-5} \cmidrule(lr){6-6} \cmidrule(lr){7-7}\cmidrule(lr){8-8} \cmidrule(lr){9-9}
			& .00 & 62.0 $\pm$ 10.6 &63.8 $\pm$ 9.0 & n.a. & n.a. & 60.9 $\pm$ 9.4 & 64.8 $\pm$ 8.2 & \textbf{65.0 $\pm$ 9.4}   \\
			& .01 & 73.0 $\pm$ 8.0 & 70.5 $\pm$ 8.3 & 49.7 $\pm$ 1.7 & 55.6 $\pm$ 5.0 & 72.6 $\pm$ 7.4 & 73.4 $\pm$ 7.7 & \textbf{80.2 $\pm$ 6.6} \\
			CIFAR-10 & .05 & 71.5 $\pm$ 8.1& 73.3 $\pm$ 8.4 & 50.8 $\pm$ 4.7& 63.5 $\pm$ 8.0 & 77.9 $\pm$ 7.2  &78.9 $\pm$ 7.7 & \textbf{81.5 $\pm$ 6.4} \\
			& .10 &70.1 $\pm$ 8.1 & 74.0 $\pm$ 8.1 & 52.0 $\pm$ 5.5 & 67.7 $\pm$ 9.6 & 79.8 $\pm$ 7.1 & 80.1 $\pm$ 7.6 & \textbf{83.6 $\pm$ 6.4} \\
			& .20 &67.4 $\pm$ 8.8& 74.5 $\pm$ 8.0 & 53.2 $\pm$ 6.7 & 80.5 $\pm$ 5.9 & 81.9 $\pm$ 7.0 & 81.6 $\pm$ 7.3 & \textbf{84.8 $\pm$ 7.2}\\
			\bottomrule
	\end{tabular}}
	\vspace{-2mm}
	\caption{Averaged AUROC under different collected-anomaly ratios $\gamma_l$.}
	\label{tal:1}
	\vspace{-2.5mm}
\end{table*}

\paragraph{Baselines}
Semi-supervised anomaly detection methods are used for comparison: \emph{SSAD} \cite{SSAD}, \emph{SS-DGM} \cite{SS-DGM}, \emph{Deep SAD} \cite{DeepSAD}, negative data augmentation method (\emph{NDA}) \cite{NDA21}, and active anomaly detection (\emph{AAD}) \cite{aad2020} for tabular dataset. In addition, a supervised binary classifier is also trained for comparison. The performance of these methods, except NDA and AAD, are quoted from Deep SAD in \cite{DeepSAD}. The NDA and AAD papers concentrate on different scenarios as ours, thus we report their performance by running and tuning their publicized code on our experiments.

\subsection{Performance and Analysis}
\paragraph{Overall Performance} 
Table \ref{tal:1} shows the performance of our proposed model. To investigate how the number of collected anomalies affects the performance, the performance is evaluated under different values of $\gamma_l$, where $\gamma_l$ is defined as the ratio between the number of collected anomalies and the number of normal samples. When $\gamma_l$ is set to $0$, the semi-supervised methods are reduced to the unsupervised ones. It can be seen that as long as a very small fraction of anomalies is used, a significant performance improvement can be observed, especially on the complex dataset of CIFAR-10. And as the ratio of available anomalies further increases, a steady improvement can be observed. Under the specific case of $\gamma_l=0.01$, the performance gains observed in our proposed method over the best baseline on MNIST, F-MNIST and CIFAR-10 are $0.6\%$, $2.3\%$ and $6.8\%$, respectively,  demonstrating that our method can exploit the partially-observed anomalies effectively to boost the overall detection performance. Moreover, it can be also seen that as the dataset of interest becomes more complex, the advantages of our proposed method become more clear. This may be attributed to the strong ability of GANs in modeling the probabilistic distribution of image data. By contrast, due to the difficulties of finding an appropriate metric to measure the distance in high-dimensional data space, distance-based detection methods, like SSAD and Deep SAD, struggle in such scenarios. As for the NDA, although it is also established on GANs, it focuses more on the generation of images, lacking the necessary bidirectional structure to produce the surrogate detection criteria like reconstruction error as in our model.

\paragraph{Performance with polluted normal dataset ${\mathcal{X}}^+$} In some scenarios, it may be difficult to obtain a clean dataset ${\mathcal{X}}^+$, in which a small proportion of anomalous samples may be mixed. To investigate how our proposed model performs under such circumstances, experiments are conducted under different ratios of pollution $\gamma_p$, where $\gamma_p$ is defined as the ratio between the number of pollution anomalies and the number of normal samples in the training. Fig. \ref{img:resultp} shows the detection performance of different models as a function of $\gamma_p$ on different datasets, in which the $\gamma_l$ is fixed as $0.05$. It can be seen that the performance of almost all methods deteriorates as the pollution ratio $\gamma_p$ increases from $0.0$ to $0.2$. However, the proposed method decreases much slower than the compared ones. This may be because the probabilistic method is more tolerant to data pollution than distance-based methods. We can observe that the performance of NDA decreases quickly as $\gamma_p$ increases. This may be because data pollution breaks its required distribution disjoint assumption in NDA.

\begin{figure}[!tb]
    \centering
    \begin{minipage}{0.46\linewidth}
        \centering
        \includegraphics[width=1\linewidth]{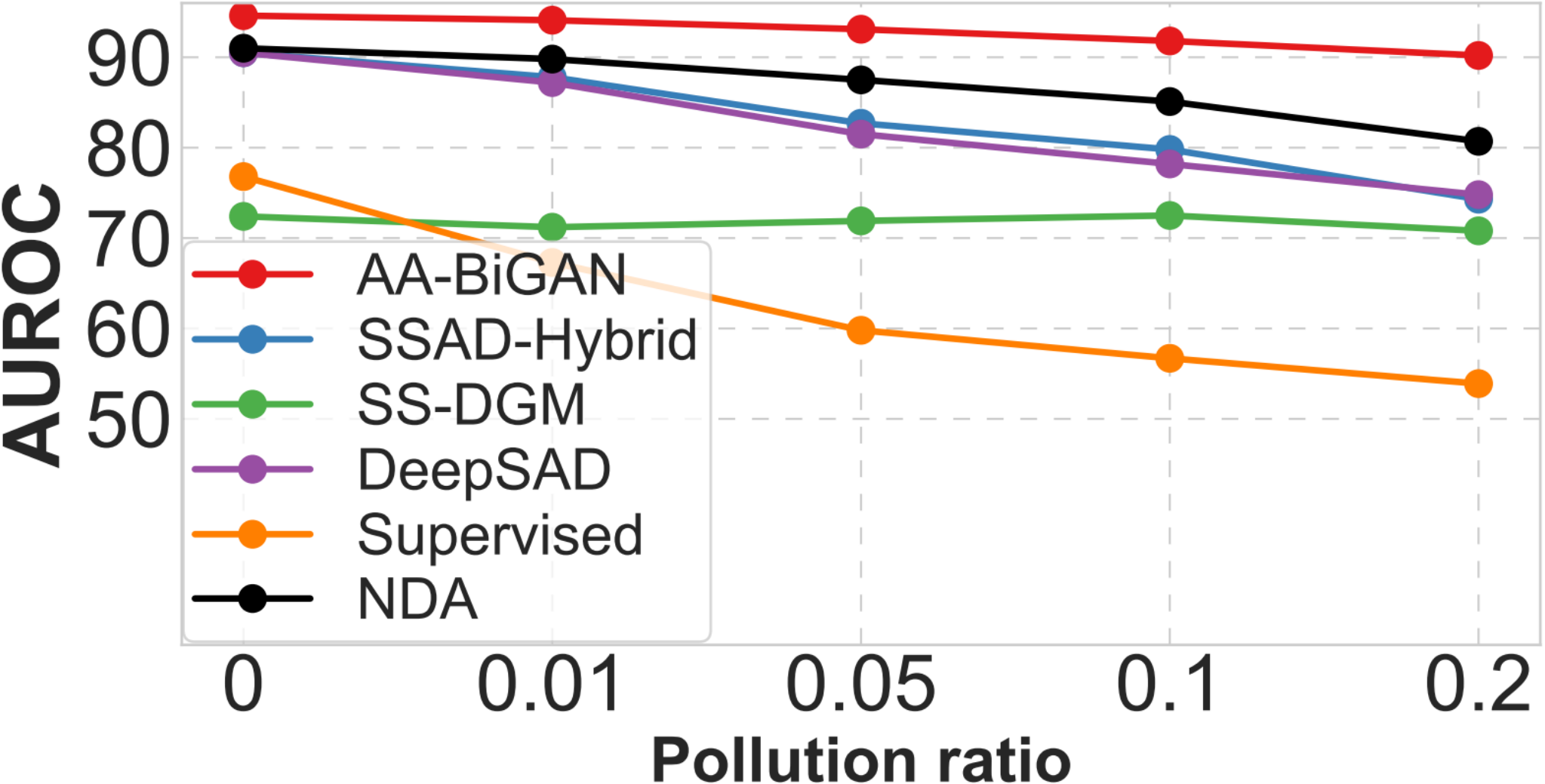}
        \vspace{-6mm}
        \caption*{\scriptsize F-MNIST}
    \end{minipage}
    \hspace{0.02\linewidth}
    \begin{minipage}{0.46\linewidth}
        \centering
        \includegraphics[width=1\linewidth]{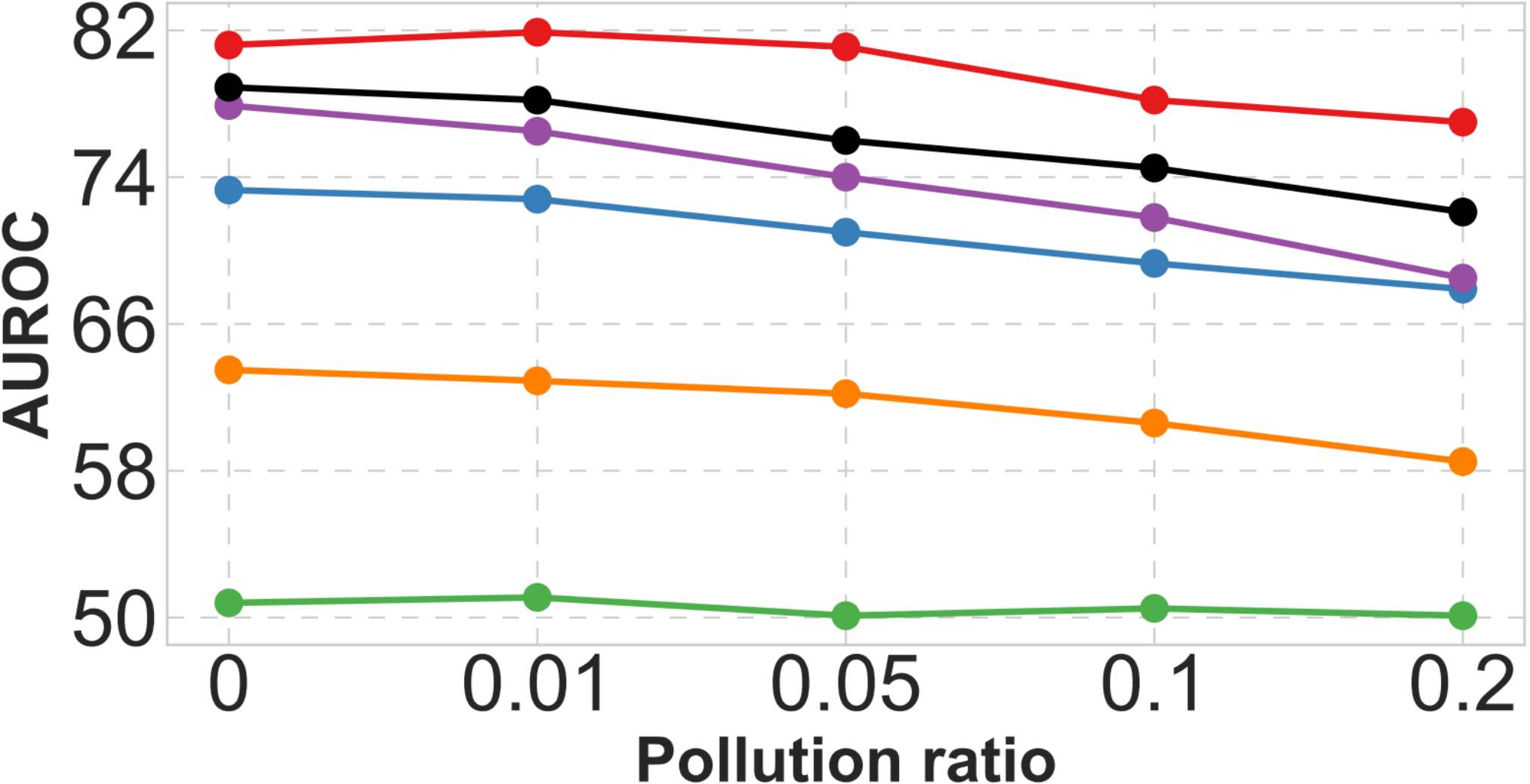}
        \vspace{-6mm}
        \caption*{\scriptsize CIFAR-10}
    \end{minipage}
    \vspace{-2mm}
\caption{The detection performance as a function of pollution ratio $\gamma_p$ on F-MNIST and CIFAR-10 datasets.}
\vspace{-2mm}
\label{img:resultp}
\end{figure}

\begin{figure}[!tb]
    \centering
    \begin{minipage}{0.46\linewidth}
        \centering
        \includegraphics[width=1\linewidth]{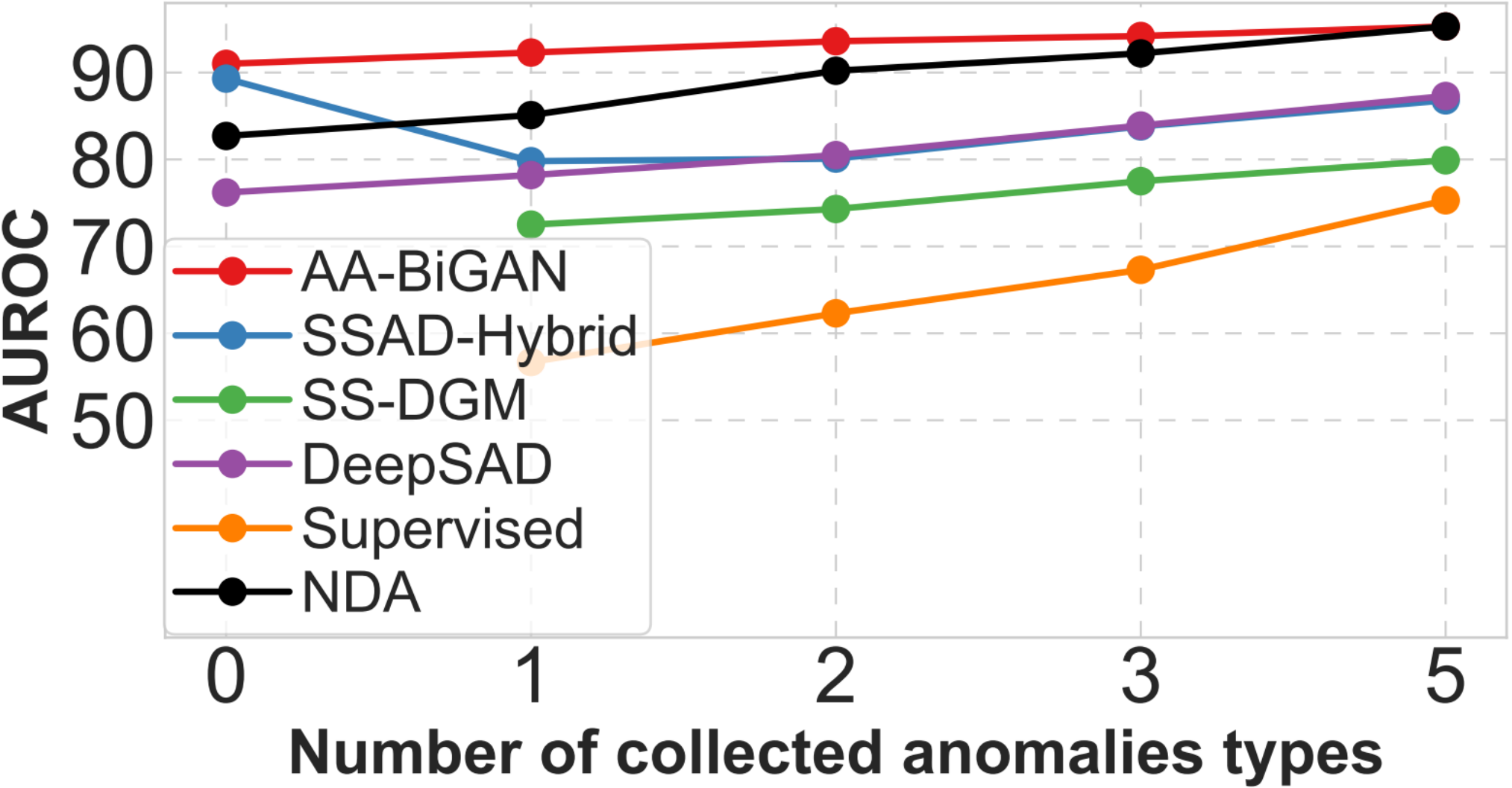}
        \vspace{-6mm}
        \caption*{\scriptsize F-MNIST}
    \end{minipage}
    \hspace{0.02\linewidth}
    \begin{minipage}{0.46\linewidth}
        \centering
        \includegraphics[width=1\linewidth]{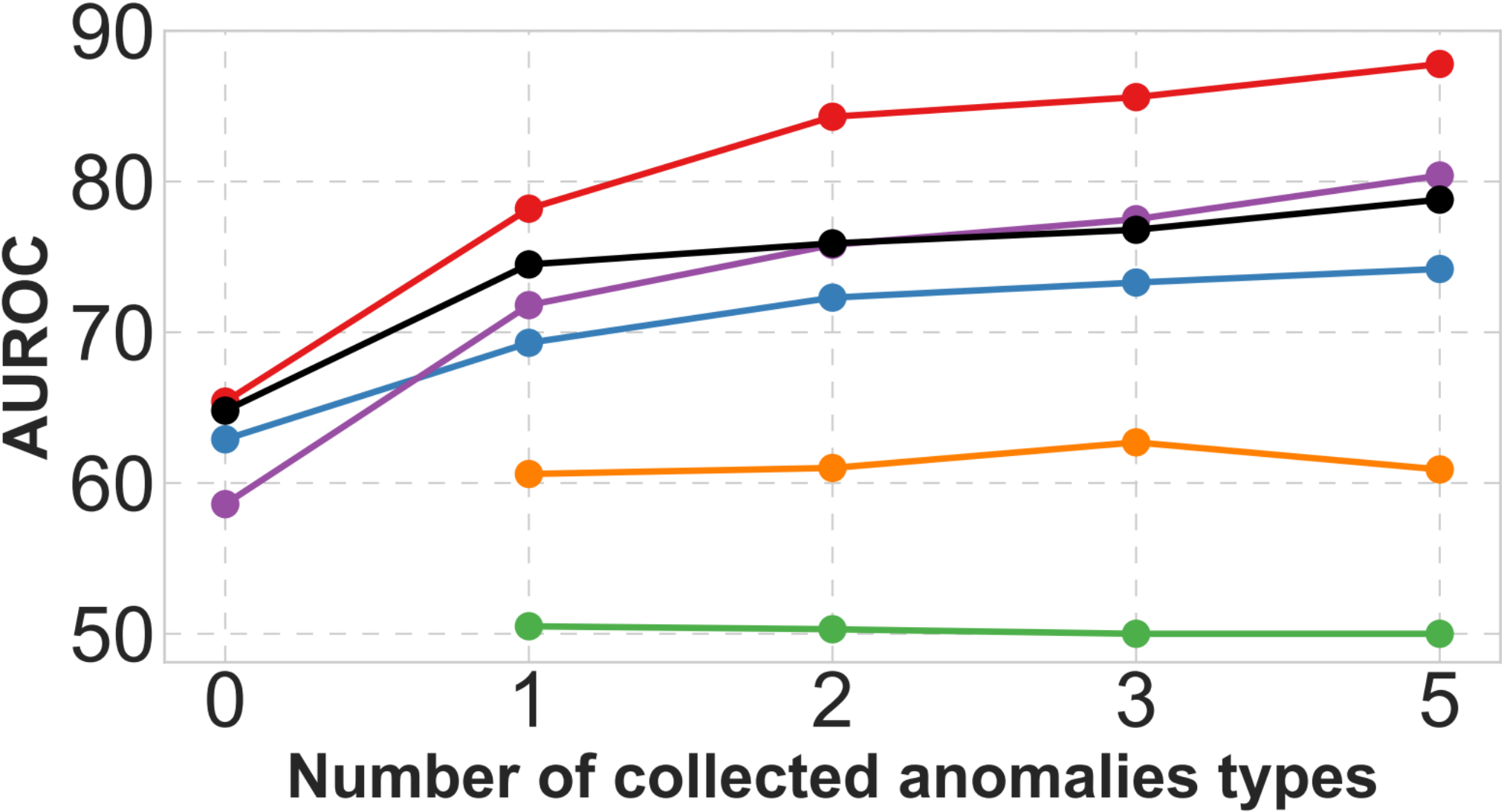}
        \vspace{-6mm}
        \caption*{\scriptsize CIFAR-10}
    \end{minipage}
    \vspace{-2mm}
\caption{The detection performance as a function of the number of categories of collected anomalies $k_l$.}
\vspace{-2.5mm}
\label{img:resultk}
\end{figure}

\paragraph{Impact of the diversity of collected anomalies}
Fig. \ref{img:resultk} shows how the performance of different methods varies as the number of categories of collected anomalies increases from $0$ to $5$, in which $\gamma_l$ and $\gamma_p$ are fixed as $0.05$ and $0.1$, respectively. It can be seen that the performance of all considered methods improves steadily as the number of categories increases from $0$ to $5$. This is consistent with our intuition since more types of anomalies are exposed to the models. However, the gain of our proposed method becomes less significant as more and more categories of anomalies are added into the training. This is easy to understand, when anomalies from five categories are used, it means anomalies from over a half of anomalous categories are accessible during the training, making the problem less challenging and hence the advantage of our method less obvious. But due to the extreme diversity of anomalies in real-world applications, the collected anomalies typically can only account for a very small fraction of all types, which suggests that the setups with small number of categories are actually more meaningful.

\begin{table}[!tb]
	
	\centering
	\resizebox{\columnwidth}{!}
	{\begin{tabular}{lc<{\centering}c<{\centering}c<{\centering}c<{\centering}c<{\centering}}
			\toprule
			\makecell[l]{\\Data} &  \makecell[c]{\\AAD} &  \makecell[c]{SSAD\\Hybrid} & \makecell[c]{Supervised\\ Classifier} & \makecell[c]{Deep\\ SAD}  & \makecell[c]{AA-BiGAN \\(Ours)}\\
			\midrule
			Arrhythmia & 75.8 $\!\pm\!$ 3.2 & 78.3 $\!\pm\!$ 5.1  &39.2 $\!\pm\!$ 9.5 & 75.9 $\!\pm\!$ 8.7 & \textbf{80.7 $\!\pm\!$ 3.2}\\
			
			Cardio & 90.7 $\!\pm\!$ 2.1 & 86.3 $\!\pm\!$ 5.8 & 83.2 $\!\pm\!$ 9.6 &95.0 $\!\pm\!$ 1.6  & \textbf{98.0 $\!\pm\!$ 1.2} \\
			Satellite & 77.2 $\!\pm\!$ 4.1 & 86.9 $\!\pm\!$ 2.8 & 87.2 $\!\pm\!$ 2.1 &\textbf{91.5 $\!\pm\!$ 1.1} & 87.4 $\!\pm\!$ 2.3 \\
			Satimage-2 &\textbf{99.9 $\!\pm\!$ 0.1} & 96.8 $\!\pm\!$ 2.1 & 99.1 $\!\pm\!$ 0.1 &\textbf{99.9 $\!\pm\!$ 0.1} & \textbf{99.9 $\!\pm\!$ 0.1}\\
			Shuttle & 99.0 $\!\pm\!$ 0.2 & 97.7 $\!\pm\!$ 1.0 & 95.1 $\!\pm\!$ 8.0 &98.4 $\!\pm\!$ 0.9 & \textbf{99.1 $\!\pm\!$ 0.1} \\
			Thyroid & 96.5 $\!\pm\!$ 0.8 & 95.3 $\!\pm\!$ 3.1 & 97.8 $\!\pm\!$ 2.6 & 98.6 $\!\pm\!$ 0.9 & \textbf{98.9 $\!\pm\!$ 0.1}\\
			
			\bottomrule
	\end{tabular}}
	\vspace{-2.5mm}
	\caption{AUROC on classic anomaly detection datasets.}
	\label{tal:4}
	\vspace{-3.5mm}
\end{table}

\paragraph{Performance on other anomaly detection datasets}
We evaluated our method on six other classic anomaly detection datasets. Table \ref{tal:4} shows the performance of our proposed model under the scenario of $\gamma_{l}=0.01$ and $\gamma_{p}=0$. From the table, it can be seen our proposed model overall outperforms current baseline methods. Even on Arrhythimia dataset, which contains less than $500$ samples, our model still achieve a $2\%$ performance improvement, demonstrating the competitiveness of the proposed method on small datasets.

\section{Conclusion}
In this paper, we studied the problem of anomaly detection under the circumstance that a handful of anomalies are available during the training. To effectively leverage the incomplete anomalous information to help anomaly detection, an anomaly-aware GAN is developed, which is able to explicitly avoid assigning probabilities for the collected anomalies, apart from the basic capabilities of modeling the distribution of normal samples. To facilitate the computation of anomaly detection criteria like reconstruction error, the anomaly-aware GAN is designed to be bidirectional. Extensive experiments demonstrated that under the circumstance of incomplete anomalous information, our model significantly outperformed existing baseline methods.

\section*{Acknowledgement}
This work is supported by the National Natural Science Foundation of China (No. 61806223, U1811264), Key R\&D Program of Guangdong Province (No. 2018B010107005), National Natural Science Foundation of Guangdong Province (No. 2021A1515012299). This work is also sponsored by CAAI-Huawei MindSpore Open Fund.

%% The file named.bst is a bibliography style file for BibTeX 0.99c

\bibliographystyle{named}
{\small\bibliography{ijcai22}}

\clearpage
\appendix
\twocolumn[
\begin{@twocolumnfalse}
\section*{\centering{\Large Supplementary Materials of Anomaly Detection by Leveraging Incomplete Anomalous Knowledge with Anomaly-Aware Bidirectional GANs}}
\vspace{15mm}
\end{@twocolumnfalse}
]

\theoremstyle{definition}
\setcounter{tocdepth}{4}
\setcounter{secnumdepth}{0}
\setcounter{theorem}{0} 

\section{A. Proof of Lemma \ref{lemma:1}}

\begin{lemma}
	If the discriminator $D$, generator $G$ and encoder $E$ are updated according to
	\begin{align} 
	\min_D V(D) & = {\mathbb{E}}_{({\mathbf{x}}, {\mathbf{z}})\sim p_E({\mathbf{x}}, {\mathbf{z}})}\left[(D({\mathbf{x}}, {\mathbf{z}}) - 1)^2\right] \nonumber \\
	&\quad +  {\mathbb{E}}_{({\mathbf{x}}, {\mathbf{z}}) \sim p_G({\mathbf{x}}, {\mathbf{z}})}\left[(D({\mathbf{x}}, {\mathbf{z}}) - 0)^2\right], 
	\tag{\ref{Discrim_BiLSGAN}}
    \end{align} 
    
	\begin{align} 
	\min_{G, E} V(G, E) & = {\mathbb{E}}_{({\mathbf{x}}, {\mathbf{z}})\sim p_E({\mathbf{x}}, {\mathbf{z}})}\left[\left(D({\mathbf{x}}, {\mathbf{z}}) - 0.5 \right)^2\right] \nonumber \\
	&\quad +  {\mathbb{E}}_{({\mathbf{x}}, {\mathbf{z}}) \sim p_G({\mathbf{x}}, {\mathbf{z}})}\!\! \left[\left(D({\mathbf{x}}, {\mathbf{z}}) - 0.5 \right)^2\right],
	\tag{\ref{GE_BiLSGAN}}
    \end{align}
	 the joint distributions $p_G({\mathbf{x}}, {\mathbf{z}}) = p_G({\mathbf{x}}|{\mathbf{z}})p({\mathbf{z}})$ and $p_E({\mathbf{x}}, {\mathbf{z}}) = p_E({\mathbf{z}}|{\mathbf{x}})p_{data}({\mathbf{x}})$ will converge to an identical distribution, that is, $p_G({\mathbf{x}}, {\mathbf{z}}) = p_E({\mathbf{x}}, {\mathbf{z}})$.
\label{lemma:1}
\end{lemma}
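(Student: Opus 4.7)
The plan is to mirror the standard LSGAN convergence argument of Mao et al., but applied to the joint distributions $p_E(\mathbf{x},\mathbf{z})$ and $p_G(\mathbf{x},\mathbf{z})$ over the product space rather than to marginals over $\mathbf{x}$. The high-level strategy is the usual two-stage min-max analysis: first solve for the optimal discriminator with $G$ and $E$ fixed, then plug that optimum back into the generator/encoder objective and show its minimum is attained exactly when the two joint distributions coincide.

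For the first stage, I would rewrite \eqref{Discrim_BiLSGAN} as a single integral
\begin{equation*}
V(D) = \int\!\!\int \Bigl[p_E(\mathbf{x},\mathbf{z})(D(\mathbf{x},\mathbf{z})-1)^2 + p_G(\mathbf{x},\mathbf{z})\,D(\mathbf{x},\mathbf{z})^2\Bigr]\,d\mathbf{x}\,d\mathbf{z},
\end{equation*}
and minimize the integrand pointwise in $D(\mathbf{x},\mathbf{z})$. Setting the derivative with respect to $D$ to zero yields
\begin{equation*}
D^*(\mathbf{x},\mathbf{z}) = \frac{p_E(\mathbf{x},\mathbf{z})}{p_E(\mathbf{x},\mathbf{z})+p_G(\mathbf{x},\mathbf{z})},
\end{equation*}
wherever the denominator is positive (and the value of $D$ is immaterial on the null set where both densities vanish).

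For the second stage, I would substitute $D^*$ into the generator/encoder objective \eqref{GE_BiLSGAN} and combine the two expectations over the common measure $p_E+p_G$:
\begin{align*}
V(G,E) &= \int\!\!\int (p_E+p_G)\bigl(D^*(\mathbf{x},\mathbf{z})-\tfrac{1}{2}\bigr)^2\,d\mathbf{x}\,d\mathbf{z} \\
       &= \frac{1}{4}\int\!\!\int \frac{\bigl(p_E(\mathbf{x},\mathbf{z})-p_G(\mathbf{x},\mathbf{z})\bigr)^2}{p_E(\mathbf{x},\mathbf{z})+p_G(\mathbf{x},\mathbf{z})}\,d\mathbf{x}\,d\mathbf{z},
\end{align*}
which is (a quarter of) the Pearson $\chi^2$-divergence between $p_E$ and $p_G$. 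Since this integrand is nonnegative and vanishes iff the numerator vanishes almost everywhere, the global minimum is $0$ and is attained if and only if $p_E(\mathbf{x},\mathbf{z}) = p_G(\mathbf{x},\mathbf{z})$ almost everywhere, which is the desired conclusion.

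The only genuine subtlety, and the step I would be most careful about, is justifying the pointwise minimization of $D$: one must handle points where $p_E+p_G=0$ (on which $D^*$ is undefined but also irrelevant), and one must tacitly assume that the discriminator has enough capacity to realize the optimal $D^*$ and that the generator and encoder can jointly realize any joint distribution with marginal $p(\mathbf{z})$ and $p_{data}(\mathbf{x})$ respectively. These are the standard nonparametric assumptions carried over from \cite{GAN2014,ALI16,LSGAN17}, so I would state them explicitly at the outset and then the rest of the argument is routine. The remaining calculations, in particular the algebraic identity $\bigl(\tfrac{p_E}{p_E+p_G}-\tfrac{1}{2}\bigr)^2 = \tfrac{(p_E-p_G)^2}{4(p_E+p_G)^2}$, are straightforward.
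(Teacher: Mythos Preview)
Your proposal is correct and follows essentially the same route as the paper's proof: both derive the pointwise optimal discriminator $D^*=p_E/(p_E+p_G)$, substitute it into the generator/encoder objective, and reduce $V(G,E)$ to the same integral $\tfrac{1}{4}\iint (p_E-p_G)^2/(p_E+p_G)$, identified as a Pearson $\chi^2$-type divergence vanishing iff $p_E=p_G$. Your added remarks on null sets and the nonparametric capacity assumptions are, if anything, a bit more careful than the paper's terse version.
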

\begin{proof}
We first derive the optimal discriminator $D$ for a fixed $G,E$ as below:
\begin{align}
    D^{*}({\mathbf{x}},{\mathbf{z}})=\frac{p_{E}({\mathbf{x}},{\mathbf{z}})}{p_{E}({\mathbf{x}},{\mathbf{z}})+p_{G}({\mathbf{x}},{\mathbf{z}})}.
    \label{proof1:optimalD}
\end{align}
For the convince of writing, we write $p_{E}({\mathbf{x}},{\mathbf{z}})$, $p_{G}({\mathbf{x}},{\mathbf{z}})$ as $p_{E}$ and $p_{G}$.
We reformulate \eqref{GE_BiLSGAN} as follows:
\begin{align}
\begin{split}
    V(G,E) &=\!\iint_{\mathcal{X,Z}}\!\bigg \{\! \left(D^{*}\!-\!\frac{1}{2}\right)^{2}\!\! p_{E}
    \!+\!\left(D^{*}\!-\!\frac{1}{2}\right)^{2}\!\!p_{G}\! \bigg \} \mathrm{d} \boldsymbol{x}\mathrm{d} \boldsymbol{z}\\
    &=\!\iint_{\mathcal{X,Z}}\!\bigg \{\! \left(\frac{p_{E}}{p_{E}\!+\!p_{G}}\!-\!\frac{1}{2}\right)^{2}\!\! \left(p_{E}\!+\!p_{G}\right)\!\bigg \} \mathrm{d} \boldsymbol{x}\mathrm{d} \boldsymbol{z}\\
    &=\!\iint_{\mathcal{X,Z}}\bigg \{\! \left(\frac{\frac{1}{2}p_{E}\!-\!\frac{1}{2}p_{G}}{p_{E}\!+\!p_{G}}\right)^{2}\left(p_{E}\!+\!p_{G}\right)\!\bigg \} \mathrm{d} \boldsymbol{x}\mathrm{d} \boldsymbol{z}\\
    &=\!\iint_{\mathcal{X,Z}} \frac{\left(\frac{1}{2}p_{E}\!-\!\frac{1}{2}p_{G}\right)^2}{p_{E}\!+\!p_{G}} \mathrm{d} \boldsymbol{x}\mathrm{d} \boldsymbol{z}\\
    &=\frac{1}{4} \chi^{2}_{Pearson}\left(p_{E}\!+\!p_{G} || 2p_{G}\right).
    \label{proof1:GEobjective}
\end{split}
\end{align}
Where $\chi^{2}_{Pearson}$ is the Pearson $\chi^{2}$ divergence. Thus minimizing \eqref{proof1:GEobjective} yields minimizing the Pearson $\chi^{2}$ divergence between $p_{E}+p_{G}$ and $2p_{G}$. When $p_{G} = p_{E}$, \eqref{Discrim_BiLSGAN} and \eqref{GE_BiLSGAN} converge to the optimal solution. 
\end{proof}

\section{B. Proof of Lemma \ref{lemma_disjoint}}
\begin{theorem}
	Suppose $Supp(p_{\mathcal{X}}^+({\mathbf{x}})) \cap Supp(p_{\mathcal{X}}^-({\mathbf{x}}))=\emptyset$. 	If the discriminator $D$, generator $G$ and encoder $E$ are updated according to
	\begin{align}
	\min_D V(D) & = {\mathbb{E}}_{({\mathbf{x}}, {\mathbf{z}})\sim p_E^+({\mathbf{x}}, {\mathbf{z}})}\left[(D({\mathbf{x}}, {\mathbf{z}}) - 1)^2\right] \nonumber \\
	&\quad + {\mathbb{E}}_{({\mathbf{x}}, {\mathbf{z}}) \sim p_E^-({\mathbf{x}}, {\mathbf{z}})}\left[(D({\mathbf{x}}, {\mathbf{z}}) - 0)^2\right] \nonumber \\
	&\quad +  {\mathbb{E}}_{({\mathbf{x}}, {\mathbf{z}}) \sim p_G({\mathbf{x}}, {\mathbf{z}})}\left[(D({\mathbf{x}}, {\mathbf{z}}) - 0)^2\right],
	\tag{\ref{discrim_anoAwareGAN}}
    \end{align}
    
    \begin{align} 
	\min_{G, E} V(G, E) & = {\mathbb{E}}_{({\mathbf{x}}, {\mathbf{z}})\sim p_E^+({\mathbf{x}}, {\mathbf{z}})}\left[\left(D({\mathbf{x}}, {\mathbf{z}}) - 0.5 \right)^2\right] \nonumber \\
	&\quad + \! {\mathbb{E}}_{({\mathbf{x}}, {\mathbf{z}})\sim p_E^-({\mathbf{x}}, {\mathbf{z}})}\!\! \left[\left(D({\mathbf{x}}, {\mathbf{z}}) - 0.5 \right)^2\right] \nonumber \\
	&\quad +\!  {\mathbb{E}}_{({\mathbf{x}}, {\mathbf{z}}) \sim p_G({\mathbf{x}}, {\mathbf{z}})}\!\! \left[\left(D({\mathbf{x}}, {\mathbf{z}}) \!- \! 0.5 \right)^2\right],
	\tag{\ref{Gen_anoAwareGAN}}
    \end{align}
     after convergence, the two distributions $p_G({\mathbf{x}}, {\mathbf{z}})$ and $p_E^+({\mathbf{x}}, {\mathbf{z}})$ will be equal, that is, $p_G({\mathbf{x}}, {\mathbf{z}}) = p_E^+({\mathbf{x}}, {\mathbf{z}})$.
\label{lemma_disjoint}
\end{theorem}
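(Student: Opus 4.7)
The plan follows the usual recipe for GAN-type minimax proofs: first solve for the optimal discriminator in closed form, then substitute it into the generator-encoder objective and identify the unique minimizer. For step one, the integrand in \eqref{discrim_anoAwareGAN} is pointwise a quadratic in $D({\mathbf{x}}, {\mathbf{z}})$ with non-negative weights $p_E^+, p_E^-, p_G$; setting its derivative to zero immediately gives $D^{*} = p_E^+/(p_E^+ + p_E^- + p_G)$, matching \eqref{optimaldiscrim_anoAwareGAN}.

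For step two, I would substitute $D^{*}$ into \eqref{Gen_anoAwareGAN}. Because all three target values equal $\tfrac{1}{2}$, the three expectations share the common factor $(D^{*} - \tfrac{1}{2})^2$ and merge into a single integral weighted by $p_E^+ + p_E^- + p_G$. Using the identity $D^{*} - \tfrac{1}{2} = (p_E^+ - p_E^- - p_G)/[2(p_E^+ + p_E^- + p_G)]$, a short cancellation yields
$$V(G, E) = \frac{1}{4}\int \frac{(p_E^+ - p_E^- - p_G)^2}{p_E^+ + p_E^- + p_G} \, d{\mathbf{x}} \, d{\mathbf{z}}.$$

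For step three, I would exploit the disjoint support hypothesis. Since $p_E^{\pm}({\mathbf{x}}, {\mathbf{z}}) = p_E({\mathbf{z}}|{\mathbf{x}}) \, p_{\mathcal{X}}^{\pm}({\mathbf{x}})$, disjointness of $p_{\mathcal{X}}^+$ and $p_{\mathcal{X}}^-$ in ${\mathbf{x}}$ carries over to disjointness of the joint supports. Partitioning the domain into $A = Supp(p_E^+)$, $B = Supp(p_E^-)$, and $C$ their complement, the integrand collapses on each region: to $(p_E^+ - p_G)^2/(p_E^+ + p_G)$ on $A$, to $p_E^- + p_G$ on $B$, and to $p_G$ on $C$. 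Summing and using $\int p_E^- = \int p_G = 1$ leads to
$$4V(G, E) = \int_A \frac{(p_E^+ - p_G)^2}{p_E^+ + p_G}\, d{\mathbf{x}} \, d{\mathbf{z}} + 2 - \int_A p_G \, d{\mathbf{x}} \, d{\mathbf{z}}.$$

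Both quantities on the right are bounded below (by $0$ and $1$ respectively, using $\int_A p_G \leq 1$), so $V(G, E) \geq \tfrac{1}{4}$; equality forces $p_G = p_E^+$ on $A$, which in turn gives $\int_A p_G = \int_A p_E^+ = 1$, hence $p_G \equiv 0$ off $A$, and combined, $p_G = p_E^+$ everywhere. The main difficulty I anticipate is step three: one has to keep careful track of the regional collapses of the integrand and then couple the first summand with the normalization constraint encoded in the second, because minimizing either in isolation would not alone pin down $p_G$ off $A$. Steps one and two are essentially mechanical once the $\tfrac{1}{2}$-targets are factored out.
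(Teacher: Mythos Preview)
Your proposal is correct and follows essentially the same route as the paper's proof: derive the optimal discriminator, substitute it to collapse $V(G,E)$ into a single ratio integral, then split by the disjoint supports and bound each piece from below. The only cosmetic differences are that the paper names the region-$A$ term as a Pearson $\chi^2$ divergence and tacitly restricts $\mathcal{X}$ to $Supp(p_{\mathcal{X}}^+)\cup Supp(p_{\mathcal{X}}^-)$, whereas you keep the raw ratio form and explicitly handle the complement region $C$---arguably a slightly cleaner bookkeeping.
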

\begin{proof}
For the convince of writing, we write  $p^{+}_{E}({\mathbf{x}},{\mathbf{z}})$, $p^{-}_{E}({\mathbf{x}},{\mathbf{z}})$, $p_{G}({\mathbf{x}},{\mathbf{z}})$ as $p^{+}_{E}$, $p^{-}_{E}$, and $p_{G}$. Expressing the expectations as integrals, \eqref{discrim_anoAwareGAN} and \eqref{Gen_anoAwareGAN} can be rewritten as:
\begin{align}
\begin{split}
    V(D)&=
    \iint_{\mathcal{X,Z}}\bigg \{\left(D({\mathbf{x}},{\mathbf{z}})-1\right)^{2} p_{E}^{+}
    \\&\quad +\left(D({\mathbf{x}},{\mathbf{z}})-0\right)^{2} p_{E}^{-}
    \\&\quad +\left(D({\mathbf{x}},{\mathbf{z}})-0\right)^{2} p_{G} \bigg \} \mathrm{d} \boldsymbol{x}\mathrm{d} \boldsymbol{z},
    \label{proof2:Dobjective}
\end{split}
\end{align}

\begin{align}
\begin{split}
    V(G,E)&=
    \iint_{\mathcal{X,Z}}\bigg \{ \left(D^{*}({\mathbf{x}},{\mathbf{z}})-0.5\right)^{2} p_{E}^{+}
    \\&\quad +\left(D^{*}({\mathbf{x}},{\mathbf{z}})-0.5\right)^{2} p_{E}^{-}
    \\&\quad +\left(D^{*}({\mathbf{x}},{\mathbf{z}})-0.5\right)^{2} p_{G} \bigg \} \mathrm{d} \boldsymbol{x}\mathrm{d} \boldsymbol{z},
    \label{proof2:GEobjective}
\end{split}
\end{align}
where $\mathcal{X} = Supp(p_{\mathcal{X}}^{+}) \cup Supp(p_{\mathcal{X}}^{-})$.

Forcing the first order derivative of the \eqref{proof2:Dobjective} to be 0, we obtain the optimal solution of discriminator:
\begin{align}
    D^*({\mathbf{x}}, {\mathbf{z}}) = \frac{p_E^+({\mathbf{x}}, {\mathbf{z}})}{p_E^+({\mathbf{x}}, {\mathbf{z}})+p_E^-({\mathbf{x}}, {\mathbf{z}})+p_G({\mathbf{x}}, {\mathbf{z}})}.
    \tag{\ref{optimaldiscrim_anoAwareGAN}}
    \label{proof2:Doptimal}
\end{align}
Using optimal solution of discriminator we can reformulate \eqref{proof2:GEobjective} as 
\begin{align}
\begin{split}
    V(G,E)&=\iint_{\mathcal{X,Z}}\bigg \{\left(\frac{p_{E}^{+}}{p_{E}^{+}+p_{E}^{-}+p_{G}}-\frac{1}{2}\right)^{2}p_{E}^{+}
    \\&\quad +\left(\frac{p_{E}^{+}}{p_{E}^{+}+p_{E}^{-}+p_{G}}-\frac{1}{2}\right)^{2} p_{E}^{-}
    \\&\quad +\left(\frac{p_{E}^{+}}{p_{E}^{+}+p_{E}^{-}+p_{G}}-\frac{1}{2}\right)^{2} p_{G} \bigg \} \mathrm{d} \boldsymbol{x}\mathrm{d} \boldsymbol{z}.
    \label{proof2:Geq1}
\end{split}
\end{align}
Because  $Supp(p_{\mathcal{X}}^{+}) \cap Supp(p_{\mathcal{X}}^{-})= \varnothing$, $p^{+}_{E}(x,z)=p^{+}_{\mathcal{X}}(x)p_{E}(z \vert x)$ and $p^{-}_{E}(x,z)=p^{-}_{\mathcal{X}}(x)p_{E}(z \vert x)$ are disjoint. Thus we divide entire  ${\mathcal{X,Z}}$ space into two sub-spaces, $(\mathcal{X,Z})_1=Supp(p_{E}^{+}(\mathcal{X,Z})) \cap Supp(p_{G}(\mathcal{X,Z}))$ and $(\mathcal{X,Z})_2=Supp(p_{E}^{-}(\mathcal{X,Z})) \cap Supp(p_{G}(\mathcal{X,Z}))$. On the subspace $(\mathcal{X,Z})_1$, $p_{E}^{+} \geq 0$ and $p_{E}^{-} = 0$. On the subspace $(\mathcal{X,Z})_1$, $p_{E}^{-} \geq 0$ and $p_{E}^{+} = 0$.
Using above properties to rewrite the \eqref{proof2:Geq1}, 
\begin{align}
\begin{split}
    V(G,E)&=\!\iint_{(\mathcal{X,Z})_{1}}\bigg \{ \left(\frac{p_{E}^{+}}{p_{E}^{+}\!+\!0\!+\!p_{G}}\!-\!\frac{1}{2}\right)^{2}\!\!p_{E}^{+}\!
    \\&\quad +\left(\frac{p_{E}^{+}}{p_{E}^{+}\!+\!0\!+\!p_{G}}\!-\!\frac{1}{2}\right)^{2}\!\!p_{G} \! \bigg \} \mathrm{d} \boldsymbol{x}\mathrm{d} \boldsymbol{z}\\
    \\&\quad +\!\iint_{(\mathcal{X,Z})_{2}}\bigg \{\! \left(\frac{0}{0\!+\!p_{E}^{-}\!+\!p_{G}}\!-\!\frac{1}{2}\right)^{2}\!\!p_{E}^{-}\!
    \\&\quad +\left(\frac{0}{0\!+\!p_{E}^{-}\!+\!p_{G}}\!-\!\frac{1}{2}\right)^{2}\!\! p_{G} \!\bigg \} \mathrm{d} \boldsymbol{x}\mathrm{d} \boldsymbol{z}.
    \label{proof2:Geq2}
\end{split}
\end{align}
Thus \eqref{proof2:Geq2} can be reformulated to
\begin{align}
\begin{split}
    &\!\!\!V(G,E)=\iint_{(\mathcal{X,Z})_{2}}\frac{1}{4}p_{G} \ \mathrm{d} \boldsymbol{x}\mathrm{d} \boldsymbol{z}+\!\iint_{\!(\mathcal{X,Z})_{2}}\frac{1}{4}p_{E}^{-}\ \mathrm{d} \boldsymbol{x}\mathrm{d} \boldsymbol{z}
    \\&\!\!\!\!\!+\!\!\iint_{(\mathcal{X,Z})_{1}}\!\!\!\bigg \{\!\!\!\left(\frac{p_{E}^{+}}{p_{E}^{+}\!+\!p_{G}}\!-\!\frac{1}{2}\right)^{2}\!\!\! p_{E}^{+}
    \!+\!\left(\frac{p_{E}^{+}}{p_{E}^{+}\!+\!p_{G}}\!-\!\frac{1}{2}\right)^{2}\!\!\! p_{G}\! \bigg \} \mathrm{d} \boldsymbol{x}\mathrm{d} \boldsymbol{z}.
    \label{proof2:Geq3}
\end{split}
\end{align}
Minimizing \eqref{proof2:Geq3},
\begin{align}
\begin{split}
    &\!\!\!V(G,E) 
    \\&\!\!\!\!\!\!= \iint_{(\mathcal{X,Z})_{2}}\frac{1}{4}p_{G} \ \mathrm{d} \boldsymbol{x}\mathrm{d} \boldsymbol{z}+\iint_{(\mathcal{X,Z})_{2}}\frac{1}{4}p_{E}^{-}\ \mathrm{d} \boldsymbol{x}\mathrm{d} \boldsymbol{z}
    %&=\iint_{(\mathcal{X,Z})_{1}}\bigg \{ \frac{1}{2}\left(\frac{p_{E}^{+}}{p_{E}^{+}+p_{G}}-\frac{1}{2}\right)^{2} p_{E}^{+}+\frac{1}{2}\left(\frac{p_{E}^{+}}{p_{E}^{+}+p_{G}}-\frac{1}{2}\right)^{2} p_{G} \bigg \} \mathrm{d} \boldsymbol{x}\mathrm{d} \boldsymbol{z}+\frac{2c^{2}}{3}%\\
    \\&\quad \!\!\!\!\!+\iint_{(\mathcal{X,Z})_{1}}\bigg \{\left(p_{E}^{+} + p_{G}\right) \left(\frac{p_{E}^{+}}{p_{E}^{+}+p_{G}}-\frac{1}{2}\right)^{2} \bigg \}\mathrm{d} \boldsymbol{x}\mathrm{d} \boldsymbol{z} 
    \\&\!\!\!\!\!\! =\! \! \iint_{(\mathcal{X,Z})_{1}}\!\!\bigg \{\!\frac{\left(\frac{1}{2} p_{E}^{+}\!-\!\frac{1}{2} p_{G}\right)^{2}}{p_{E}^{+}\!+\!p_{G}}\!\bigg \}\mathrm{d} \boldsymbol{x}\mathrm{d} \boldsymbol{z}
    \!+\!\!\iint_{(\mathcal{X,Z})_{2}}\!\!\frac{1}{4}p_{G} \ \mathrm{d} \boldsymbol{x}\mathrm{d} \boldsymbol{z}\!+\!\frac{1}{4}\\
    \\&\!\!\!\!\!\! =\! \frac{1}{4} \chi^{2}_{Pearson}\!\left(p_{E}^{+}\!+\!p_{G}\ ||\ 2p_{G}\right)
    \!+\!\! \iint_{(\mathcal{X,Z})_{2}}\!\!\frac{1}{4}p_{G} \ \mathrm{d} \boldsymbol{x}\mathrm{d} \boldsymbol{z}\!+\!\frac{1}{4}.
    \label{proof2:Geq4}
\end{split}
\end{align}
Where $\chi^{2}_{Pearson}$is the Pearson  $\chi^{2}$ divergence, thus $\chi^{2}_{Pearson}\left(p_{E}^{+}+p_{G}||2p_{G}\right) \ge 0$, $p_{G}$ is the probaility distribution function, thus $\iint_{(\mathcal{X,Z})_{2}}\frac{1}{4}p_{G} \ \mathrm{d} \boldsymbol{x}\mathrm{d} \boldsymbol{z} \ge 0$, only when $p_{G} = p_{E}^{+}$, $\chi^{2}_{Pearson}\left(p_{E}^{+}+p_{G}||2p_{G}\right) = 0$ and $\iint_{(\mathcal{X,Z})_{2}}\frac{1}{4}p_{G} \ \mathrm{d} \boldsymbol{x}\mathrm{d} \boldsymbol{z} = 0$, thus minimizing \eqref{proof2:Geq4} can obtain $p_{G}(x,z)=p^{+}_{E}(x,z)$.

\end{proof}

\section{C. Proof of Lemma \ref{lemma_GenerousanoAwere}}
\begin{theorem}
	If the discriminator $D$, generator $G$ and encoder $E$ are updated according to
	\begin{align} 
		\min_D V(D) & = {\mathbb{E}}_{({\mathbf{x}}, {\mathbf{z}})\sim p_E^+({\mathbf{x}}, {\mathbf{z}})}\left[(D({\mathbf{x}}, {\mathbf{z}}) - a)^2\right] \nonumber \\
		&\quad + \! {\mathbb{E}}_{({\mathbf{x}}, {\mathbf{z}}) \sim p_E^-({\mathbf{x}}, {\mathbf{z}})}\!\! \left[\!\!\left(\!D({\mathbf{x}}, {\mathbf{z}}) \!-\! \frac{a\!+\!b}{2}\!\right)^2\!\right] \nonumber \\
		&\quad +  {\mathbb{E}}_{({\mathbf{x}}, {\mathbf{z}}) \sim p_G({\mathbf{x}}, {\mathbf{z}})}\left[(D({\mathbf{x}}, {\mathbf{z}}) - b)^2\right],
		\tag{\ref{discrim_GeneralanoAwareGAN}}
	\end{align}
	\begin{align} 
		\min_{G, E} V(G, E) & = {\mathbb{E}}_{({\mathbf{x}}, {\mathbf{z}})\sim p_E^+({\mathbf{x}}, {\mathbf{z}})}\left[\left(D({\mathbf{x}}, {\mathbf{z}}) - c \right)^2\right] \nonumber \\
		&\quad + \! {\mathbb{E}}_{({\mathbf{x}}, {\mathbf{z}})\sim p_E^-({\mathbf{x}}, {\mathbf{z}})}\!\! \left[\left(D({\mathbf{x}}, {\mathbf{z}}) - c \right)^2\right] \nonumber \\
		&\quad +\!  {\mathbb{E}}_{({\mathbf{x}}, {\mathbf{z}}) \sim p_G({\mathbf{x}}, {\mathbf{z}})}\!\! \left[\left(D({\mathbf{x}}, {\mathbf{z}}) \!- \! c \right)^2\right],
		\tag{\ref{Gen_GeneralanoAwareGAN}}
	\end{align}
the two distributions $p_G({\mathbf{x}}, {\mathbf{z}})$ and $p_E^+({\mathbf{x}}, {\mathbf{z}})$ after convergence will be equal, that is, $p_G({\mathbf{x}}, {\mathbf{z}}) = p_E^+({\mathbf{x}}, {\mathbf{z}})$.
\label{lemma_GenerousanoAwere}
\end{theorem}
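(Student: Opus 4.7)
The plan is to mirror the structure of the proof of Theorem~\ref{lemma_disjoint}: first derive the optimal discriminator $D^\ast$ from \eqref{discrim_GeneralanoAwareGAN} pointwise, substitute into \eqref{Gen_GeneralanoAwareGAN}, and then analyze when the resulting functional of $p_G$ is minimized. Since we can no longer split the domain into disjoint pieces as before, I would replace the pointwise argument with an integral-level Cauchy--Schwarz (Jensen) inequality, which gives a tight lower bound that is attained exactly at $p_G=p_E^+$.

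Concretely, setting the first-order condition on \eqref{discrim_GeneralanoAwareGAN} to zero gives
\begin{equation*}
D^\ast(\mathbf{x},\mathbf{z}) \;=\; \frac{a\,p_E^+ \;+\; \tfrac{a+b}{2}\,p_E^- \;+\; b\,p_G}{p_E^+ + p_E^- + p_G}.
\end{equation*}
Substituting $D^\ast$ into \eqref{Gen_GeneralanoAwareGAN} and introducing the shorthands $\alpha\triangleq a-c$, $\beta\triangleq \tfrac{a+b}{2}-c$, $\gamma\triangleq b-c$, the objective collapses to
\begin{equation*}
V(G,E) \;=\; \iint \frac{(\alpha\,p_E^+ + \beta\,p_E^- + \gamma\,p_G)^{2}}{p_E^+ + p_E^- + p_G}\,\mathrm{d}\mathbf{x}\,\mathrm{d}\mathbf{z}.
\end{equation*}
The critical algebraic fact I would highlight here is that the author's choice of the middle target $\tfrac{a+b}{2}$ for $p_E^-$ is precisely what enforces the relation $2\beta = \alpha+\gamma$; this relation is the lever that makes the rest of the proof work.

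Next, I would apply Cauchy--Schwarz in the form $\bigl(\iint f\bigr)^{2}\le \iint \tfrac{f^{2}}{g}\cdot \iint g$ with $f=\alpha p_E^+ + \beta p_E^- + \gamma p_G$ and $g=p_E^+ + p_E^- + p_G$. Since each of $p_E^+,p_E^-,p_G$ integrates to $1$, this yields
\begin{equation*}
V(G,E) \;\ge\; \frac{(\alpha+\beta+\gamma)^{2}}{3} \;=\; 3\beta^{2},
\end{equation*}
where the last equality uses $\alpha+\gamma=2\beta$. A direct substitution shows that $p_G=p_E^+$ attains exactly $3\beta^{2}$, so the bound is tight.

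Finally I would argue that equality forces $p_G=p_E^+$. Cauchy--Schwarz is tight only when $f$ is proportional to $g$, i.e.\ $\alpha p_E^+ + \beta p_E^- + \gamma p_G = k\,(p_E^+ + p_E^- + p_G)$ a.e., and integrating gives $k=(\alpha+\beta+\gamma)/3 = \beta$. Plugging $k=\beta$ back in, the $p_E^-$-coefficient vanishes (again thanks to $2\beta=\alpha+\gamma$), leaving $(\alpha-\beta)\,p_E^+ + (\gamma-\beta)\,p_G = 0$, which rearranges to $p_G=p_E^+$ provided $\gamma\neq\beta$, i.e.\ $a\neq b$ (the standard nondegenerate case, satisfied by the paper's choice $a=1,b=0$). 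I expect the main subtlety to be exactly this point: unlike Theorem~\ref{lemma_disjoint}, where the disjointness of supports lets one optimize pointwise, here one must argue at the level of integrals and then exploit the $2\beta=\alpha+\gamma$ identity to extract uniqueness of the minimizer; also, one should mention the mild nondegeneracy $a\neq b$, since without it the functional becomes constant in $p_G$ and the conclusion fails.
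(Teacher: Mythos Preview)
Your proposal is correct and follows essentially the same route as the paper: derive the optimal discriminator, substitute it into $V(G,E)$, apply a Jensen/Cauchy--Schwarz inequality to obtain the lower bound $3\bigl(\tfrac{a+b}{2}-c\bigr)^2$, and verify that $p_G=p_E^+$ attains it. The paper phrases the inequality step as Jensen with $\varphi(x)=(x-c)^2$ against the probability measure $\tfrac{1}{3}(p_E^++p_E^-+p_G)$, while you phrase it as Cauchy--Schwarz $\bigl(\int f\bigr)^2\le \int \tfrac{f^2}{g}\cdot\int g$; since $\varphi$ is quadratic these are the same inequality.

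Where your version actually goes further than the paper is in the endgame: the paper stops after showing that $p_G=p_E^+$ attains the bound and simply asserts this is the minimizer, whereas you extract the equality condition of Cauchy--Schwarz to prove \emph{uniqueness}, and in doing so you surface the nondegeneracy requirement $a\neq b$ (without which $\alpha=\beta=\gamma$ and $V(G,E)\equiv 3\beta^2$ for every $p_G$, so the conclusion indeed fails). That is a genuine strengthening of the paper's argument, not a deviation from it.
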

\begin{proof}
For the convince of writing, we write  $p^{+}_{E}({\mathbf{x}},{\mathbf{z}})$, $p^{-}_{E}({\mathbf{x}},{\mathbf{z}})$, $p_{G}({\mathbf{x}},{\mathbf{z}})$ as $p^{+}_{E}$, $p^{-}_{E}$, and $p_{G}$. Expressing the expectations as integrals, \eqref{discrim_GeneralanoAwareGAN} and \eqref{Gen_GeneralanoAwareGAN} can be rewritten as:
\begin{align}
\begin{split}
    V(D)&=
    \iint_{\mathcal{X,Z}}\bigg \{\left(D({\mathbf{x}},{\mathbf{z}})-a\right)^{2} p_{E}^{+}
    \\& \quad +\left(D({\mathbf{x}},{\mathbf{z}})-\frac{a+b}{2}\right)^{2} p_{E}^{-}
    \\& \quad +\left(D({\mathbf{x}},{\mathbf{z}})-b\right)^{2} p_{G} \bigg \} \mathrm{d} \boldsymbol{x}\mathrm{d} \boldsymbol{z},
    \label{proof3:Dobjective}
\end{split}
\end{align}

\begin{align}
\begin{split}
    V(G,E)&=
    \iint_{\mathcal{X,Z}}\bigg \{ \left(D^{*}({\mathbf{x}},{\mathbf{z}})-c\right)^{2} p_{E}^{+}
    \\& \quad +\left(D^{*}({\mathbf{x}},{\mathbf{z}})-c\right)^{2} p_{E}^{-}
    \\& \quad +\left(D^{*}({\mathbf{x}},{\mathbf{z}})-c\right)^{2} p_{G} \bigg \} \mathrm{d} \boldsymbol{x}\mathrm{d} \boldsymbol{z},
    \label{proof3:GEobjective}
\end{split}
\end{align}
where $\mathcal{X} = Supp(p_{\mathcal{X}}^{+}) \cup Supp(p_{\mathcal{X}}^{-})$.

Forcing the first order derivative of the \eqref{proof3:Dobjective} to be 0, we obtain the optimal solution of discriminator:
\begin{align}
    D^*({\mathbf{x}}, {\mathbf{z}}) = \frac{ap_E^+({\mathbf{x}}, {\mathbf{z}}) +\frac{a+b}{2} p_E^-({\mathbf{x}}, {\mathbf{z}}) +b p_G ({\mathbf{x}}, {\mathbf{z}}) }{p_E^+({\mathbf{x}}, {\mathbf{z}})+p_E^-({\mathbf{x}}, {\mathbf{z}})+p_G({\mathbf{x}}, {\mathbf{z}})}.
    \label{proof3:Doptimal}
\end{align}
Using optimal solution of discriminator we can reformulate \eqref{proof3:GEobjective} as 
\begin{align}
\begin{split}
    &V(G,E)
    \\&\!\!=\iint_{\mathcal{X,Z}}\bigg \{\left(\frac{ap_E^+ \!+\!\frac{a+b}{2} p_E^- \!+\!b p_G }{p_E^{+}\!+\!p_E^{-}\!+\!p_G}\!-\!c\right)^{2}p_{E}^{+}\\
    &\!\! \quad +\!\left(\frac{ap_E^+ \!+\!\frac{a+b}{2} p_E^- \!+\!b p_G }{p_E^{+}\!+\!p_E^{-}\!+\!p_G}\!-\!c\right)^{2}\! p_{E}^{-}\\
    &\!\!\quad +\!\left(\frac{ap_E^+ \!+\!\frac{a+b}{2} p_E^- \!+\!b p_G }{p_E^{+}\!+\!p_E^{-}\!+\!p_G}\!-\!c\right)^{2}\! p_{G} \bigg \} \mathrm{d} \boldsymbol{x}\mathrm{d} \boldsymbol{z}\\
    &\!\!=\iint_{\mathcal{X,Z}}\!\!\left(\frac{ap_E^+ \!+\!\frac{a+b}{2} p_E^- \!+\!b p_G }{p_E^{+}\!+\!p_E^{-}\!+\!p_G}\!-\!c\right)^{2}\!\!\left(p_{E}^{+}\!+\!p_E^{-}\!+\!p_G\right) \mathrm{d} \boldsymbol{x}\mathrm{d} \boldsymbol{z}.
    \label{proof3:Geq1}
\end{split}
\end{align}
Introducing $\!\varphi(x) \!\!=\!\! \left(x-c\right)^2$, $\varphi(x)$ is a convex function, it satisfies $\varphi \!\left(\int_{-\infty}^{\infty} g(x)f(x) \mathrm{d} \boldsymbol{x}\right)\!\!\le\!\! \int_{-\infty}^{\infty} \varphi\left(g(x)\right)\!f(x)\ \mathrm{d} \boldsymbol{x} $, if $\int_{-\infty}^{\infty} f(x)  \mathrm{d} \boldsymbol{x}\!=\!1$, \eqref{proof3:Geq1}, it can be rewritten as 

\begin{align}
\begin{split}
    &V(G,E) 
    \\&\!\!=\!3\iint_{\mathcal{X,Z}}\! \frac{1}{3}\varphi\left(\!\frac{ap_E^+ \!+\!\frac{a+b}{2} p_E^- \!+\!b p_G \!}{p_E^{+}\!+\!p_E^{-}\!+\!p_G}\right)\!\!\left(p_{E}^{+}\!+\!p_{E}^{-}\!+\!p_{G}\right) \mathrm{d} \boldsymbol{x}\mathrm{d} \boldsymbol{z}.
    \label{proof3:Geq2}
\end{split}
\end{align}
Using the properties of convex functions, we can obtain 
\begin{align}
\begin{split}
    &V(G,E) 
    \\&\!\!=\!3\iint_{\mathcal{X,Z}}\! \frac{1}{3}\varphi\left(\!\frac{ap_E^+ \!+\!\frac{a+b}{2} p_E^- \!+\!b p_G }{p_E^{+}\!+\!p_E^{-}\!+\!p_G}\!\right)\!\left(p_{E}^{+}\!+\!p_{E}^{-}\!+\!p_{G}\right) \mathrm{d} \boldsymbol{x}\mathrm{d} \boldsymbol{z}\\
    &\!\!\ge\!3\ \varphi \bigg \{\! \iint_{\mathcal{X,Z}}\! \frac{1}{3}\!\left( p_{E}^{+}\!+\!p_{E}^{-}\!+\!p_{G}\right)\!\! \left(\!\!\frac{ap_E^+ \!+\!\frac{a+b}{2} p_E^- \!+\!b p_G }{p_E^{+}\!+\!p_E^{-}\!+\!p_G}\!\right)\! \mathrm{d} \boldsymbol{x}\mathrm{d} \boldsymbol{z}\! \bigg \}\\
    &\!\!= \!3\  \varphi\bigg \{\! \iint_{\mathcal{X,Z}}\! \frac{1}{3} \left(ap_E^+ \!+\!\frac{a+b}{2} p_E^- \!+\!b p_G \!\right)\!\mathrm{d} \boldsymbol{x}\mathrm{d} \boldsymbol{z}\! \bigg\}.
    \label{proof3:Geq3}
\end{split}
\end{align}
Because $p_{E}^{+},p_{E}^{-},p_{G}$ are all probability distribution, $\iint_{\mathcal{X,Z}} \frac{1}{3} \left(ap_E^+ +\frac{a+b}{2} p_E^- +b p_G \right) \mathrm{d} \boldsymbol{x}\mathrm{d} \boldsymbol{z} \!=\! \frac{a+b}{2}$, \eqref{proof3:Geq3} can obtain the following result

\begin{align}
\begin{split}
    &V(G,E) 
    \\&\ge\! 3\ \varphi\bigg \{ \!\iint_{\mathcal{X,Z}}\! \frac{1}{3}\! \left(ap_E^+ \!+\!\frac{a+b}{2} p_E^- \!+\!b p_G \right)\! \mathrm{d} \boldsymbol{x}\mathrm{d} \boldsymbol{z} \bigg\}\\
    &= 3\ \varphi\left(\frac{a+b}{2}\right).
    \label{proof3:Geq4}
\end{split}
\end{align}
Assuming that $p_{G} = p_{E}^{+}$, replacing $p_{G}$ in \eqref{proof3:Geq2} with $p_{E}^{+}$, we obtain 
\begin{align}
\begin{split}
    &V(G,E) 
    \\&\!\!=\!\iint_{\mathcal{X,Z}}\!\!\bigg \{\!\!\! \left(\frac{ap_E^+ \!+\!\frac{a+b}{2} p_E^- \!+\!b p_E^+ }{p_E^{+}\!+\!p_E^{-}\!+\!p_E^+}\!-\!c\right)^{2}\!\!\!\!\left(p_{E}^{+}\!+\!p_{E}^{-}\!+\!p_{E}^{+}\right)\!\!\!\bigg \} \mathrm{d} \boldsymbol{x}\mathrm{d} \boldsymbol{z}\\
    &\!\!=\! \iint_{\mathcal{X,Z}}\!\!\bigg \{\!\varphi\!\left(\frac{\left(a\!+\!b\right)p_{E}^{+}\!+\!\frac{a+b}{2}p_{E}^{-}}{2p_{E}^{+}\!+\!p_{E}^{-}}\right)\!\left(2p_{E}^{+}+p_{E}^{-}\right)\!\!\bigg \}\mathrm{d} \boldsymbol{x}\mathrm{d} \boldsymbol{z}\\
    & \!\!= \!\iint_{\mathcal{X,Z}}\!\!\bigg \{\!\varphi\!\left(\frac{a\!+\!b}{2}\right)\!\left(2p_{E}^{+}\!+\!p_{E}^{-}\right) \!\! \bigg \}\mathrm{d} \boldsymbol{x}\mathrm{d} \boldsymbol{z}\\
    & \!\!=\! 3\ \varphi\left(\frac{a\!+\!b}{2}\right).
    \label{proof3:Geq5}
\end{split}
\end{align}
Thus we observe that the minimum value of \eqref{proof3:Geq2}  is obtained when $p_{G} = p_{E}^{+}$.

\end{proof}

\begin{algorithm}[!htb]
    \textbf{Input}$\bm{:}$\\
    \hspace*{\algorithmicindent} Normal :${\mathcal{X^+}} \triangleq \{{\mathbf{x}}_1^+, {\mathbf{x}}_2^+, \cdots, {\mathbf{x}}_n^+\}$\\
    \hspace*{\algorithmicindent} Collected Anomalies :${\mathcal{X^-}} \triangleq \{{\mathbf{x}}_1^-, {\mathbf{x}}_2^-, \cdots, {\mathbf{x}}_m^-\}$\\
    \hspace*{\algorithmicindent} Adam learning rate: $\{\varepsilon_{DD^{'}},\varepsilon_{GE}\}$\\
    \textbf{Output}\\
    \hspace*{\algorithmicindent} Training model $\mathcal{W}^{*}$\\
    
    \caption{Optimization of Anomaly-Aware BiGAN \label{alg:1}}
    \begin{algorithmic}[1]
    \State \textbf{Initialize:}
    \Statex \hspace*{\algorithmicindent} Neural network weights: $\mathcal{W}$
    \For{each epoch}
        \For{each mini-batch}
            \State Draw mini-batch $\mathcal{M}$
            \State $J_{GE}=\widetilde V(G,E)$
            \State $J_{DD^{'}}=\widetilde V(D,D^{'})$
                  
           		\State $\mathcal{W}_{DD^{'}} \leftarrow \mathcal{W}_{DD^{'}} - \varepsilon_{DD^{'}} \cdot \nabla_{\mathcal{W}} J_{DD^{'}}(\mathcal{W}; \mathcal{M})$ 
           		\State $\mathcal{W}_{GE} \leftarrow \mathcal{W}_{GE} - \varepsilon_{GE} \cdot \nabla_{\mathcal{W}} J_{GE}(\mathcal{W}; \mathcal{M})$
           		
        \EndFor
    \EndFor

\end{algorithmic}
\end{algorithm}

\section{D. Reconstruction-Enhanced Objective Functions}
According to updating rules \eqref{discrim_GeneralanoAwareGAN}, \eqref{Gen_GeneralanoAwareGAN}, the joint distributions $p_G({\mathbf{x}}, {\mathbf{z}})$ and $p_E^+({\mathbf{x}}, {\mathbf{z}})$ after convergence will be equal. But in practical, a violation of cycle-consistency so $G(E(\mathbf{x})) \neq \mathbf{x}$ often occurs. This result in poor reconstruction ability. To overcome this issue, we introduce an extra discriminator $D^{'}$ to distinguish the pairs between $({\mathbf{x}}, {\mathbf{x}})$ and $({\mathbf{x}}, {\mathbf{\hat x}})$, where ${\mathbf{\hat x}}$ is the reconstruction of ${\mathbf{x}}$.    The objective functions are modified by adding the reconstruction-enhanced term,
\begin{align} \label{discrim_GeneralanoAwareGAN_XX}
		&\!\!\!\!\widetilde V(D,D^{'}) \nonumber
		\\&\!\!\!\! = V \left(D\right) + {\mathbb{E}}_{{\mathbf{x}}\sim p_E^+({\mathbf{x}})}\left[(D^{'}({\mathbf{x}}, {\mathbf{x}}) - a)^2\right] \nonumber \\
		& \!\!\!\!\quad + {\mathbb{E}}_{{\mathbf{x}} \sim p_E^-({\mathbf{x}})}\!\! \left[\!\left(\!D^{'}({\mathbf{x}}, {\mathbf{x}}) \!-\! \frac{a\!+\!b}{2}\!\right)^2\right] \nonumber \\
		&\!\!\!\! \quad +  {\mathbb{E}}_{{\mathbf{x}} \sim p_E^+({\mathbf{x}}),{\mathbf{z}} \sim p_E({\mathbf{z}}|{\mathbf{x}}),{\mathbf{\hat{x}}} \sim p_G({\mathbf{\hat{x}}}|{\mathbf{z}})}\left[\!(D^{'}({\mathbf{x}}, {\mathbf{\hat{x}}}) - b)^2\!\right],
	\end{align}
	\begin{align} \label{Gen_GeneralanoAwareGAN_XX}
		&\!\!\!\!\widetilde V(G,E) \nonumber
		\\&\!\!\!\!=V(G,E) + {\mathbb{E}}_{{\mathbf{x}}\sim p_E^+({\mathbf{x}})}\left[(D^{'}({\mathbf{x}}, {\mathbf{x}}) - c)^2\right] \nonumber \\
		& \!\!\!\!\quad +  {\mathbb{E}}_{{\mathbf{x}} \sim p_E^-({\mathbf{x}})} \left[\left(D^{'}({\mathbf{x}}, {\mathbf{x}}) \!-\! c\right)^2\right] \nonumber \\
		& \!\!\!\!\quad +  {\mathbb{E}}_{{\mathbf{x}} \sim p_E^+({\mathbf{x}}),{\mathbf{z}} \sim p_E({\mathbf{z}}|{\mathbf{x}}),{\mathbf{\hat{x}}} \sim p_G({\mathbf{\hat{x}}}|{\mathbf{z}})}\!\left[\!(D^{'}({\mathbf{x}}, {\mathbf{\hat{x}}}) - c)^2\!\right].
	\end{align}

The parameters can be updated by using the gradient of objective function. The complete training procedure is summarized in Algorithm \ref{alg:1}.

\section{E. Training Details}
For images from MNIST and F-MNIST, they are all resize to 32$\times$32$\times$3.  During training process we employ DCGAN architectures as Encoder-Generator framework. We implement our model on PyTorch and employ Adam optimizer for optimization. The learning rate of Generator and Encoder is set to be 0.0001, the learning rate of Discriminator is set to be 0.000025. The detailed information of the classic anomaly detection benchmarks are shown in Table \ref{Classic anomaly dataset}. We employ standard 3-layer MLP feed-forward architectures with 256-64-16 units.

\begin{table}[H]
\centering
\resizebox{\columnwidth}{!}
{\begin{tabular}{lrrr}
\toprule
Dataset & Numbers & Dimensions & \#outliers (\%) \\
\cmidrule(lr){1-1} \cmidrule(lr){2-2} \cmidrule(lr){3-3} \cmidrule(lr){4-4}
arrhythmia & 452 & 274 & 66 (14.6\%) \\
cardio & 1,831 & 21 & 176 (9.6\%) \\
satellite & 6,435 & 36 & 2,036 (31.6\%) \\
satimage-2 & 5,803 & 36 & 71 (1.2\%) \\
shuttle & 49,097 & 9 & 3,511 (7.2\%) \\
thyroid & 3,772 & 6 & 93 (2.5\%) \\
\bottomrule
\end{tabular}}
\caption{Classic anomaly detection benchmarks.}
\label{Classic anomaly dataset}
\end{table}

\section{F. Additional Experimental Results}

\begin{figure*}[!htb]
\centering
\subfigure[MNIST]{\label{fig:scatter_Mnist}\includegraphics[width=0.31\linewidth]{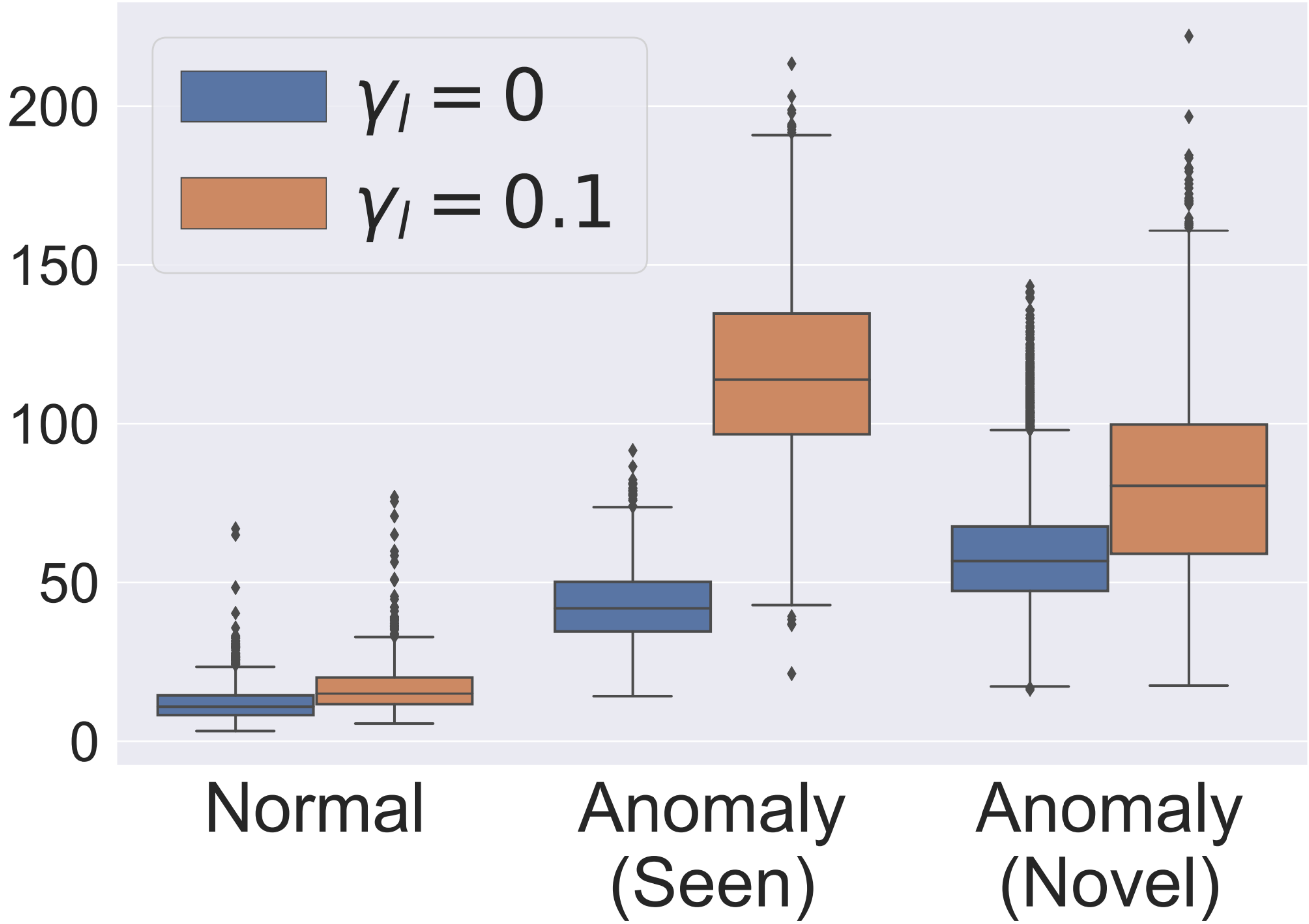}}
\hspace{0.02\linewidth}
\subfigure[F-MNIST]{\label{fig:scatter_Fmnist}\includegraphics[width=0.31\linewidth]{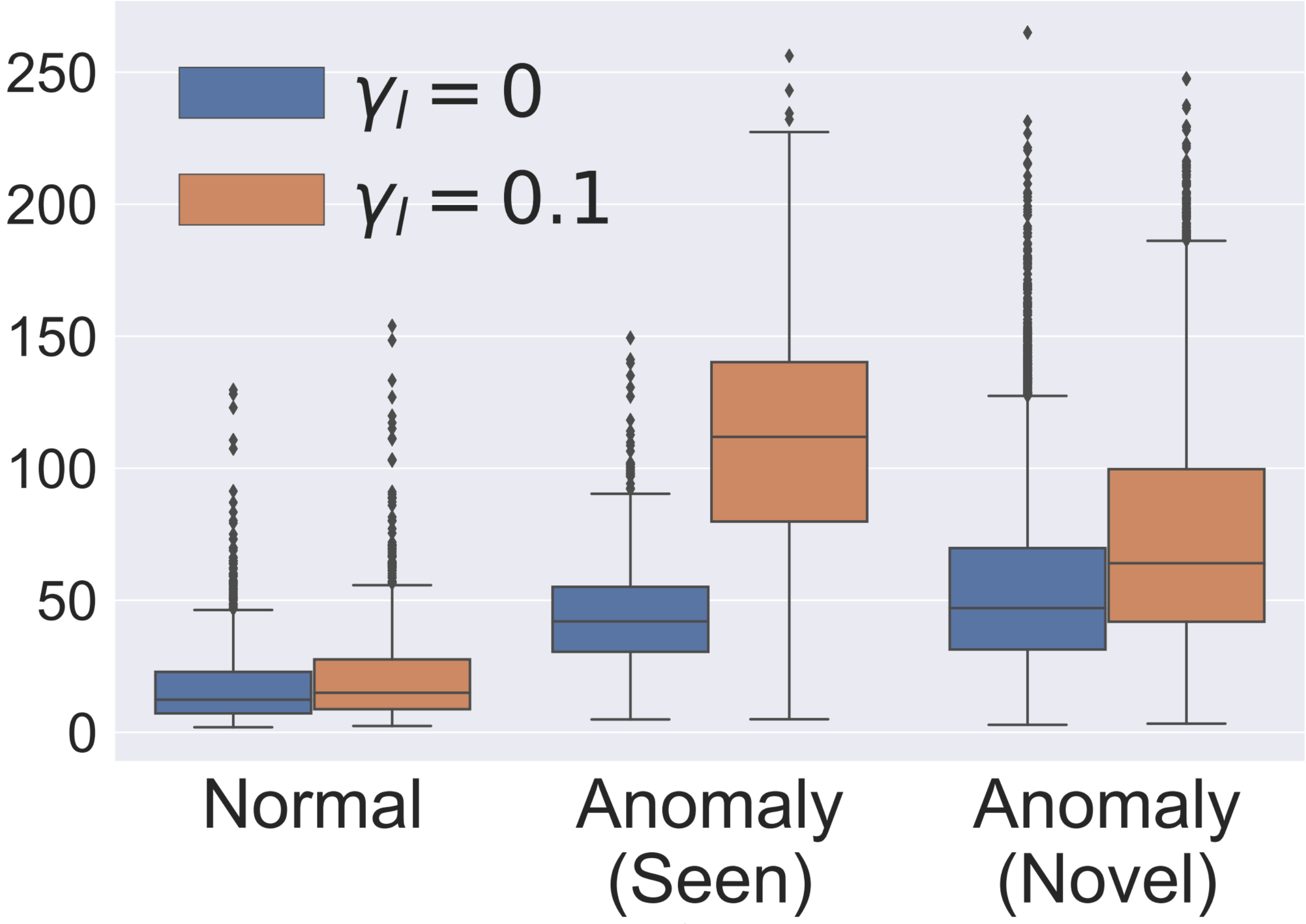}}
\hspace{0.02\linewidth}
\subfigure[CIFAR-10]{\label{fig:scatter_CIFAR10}\includegraphics[width=0.31\linewidth]{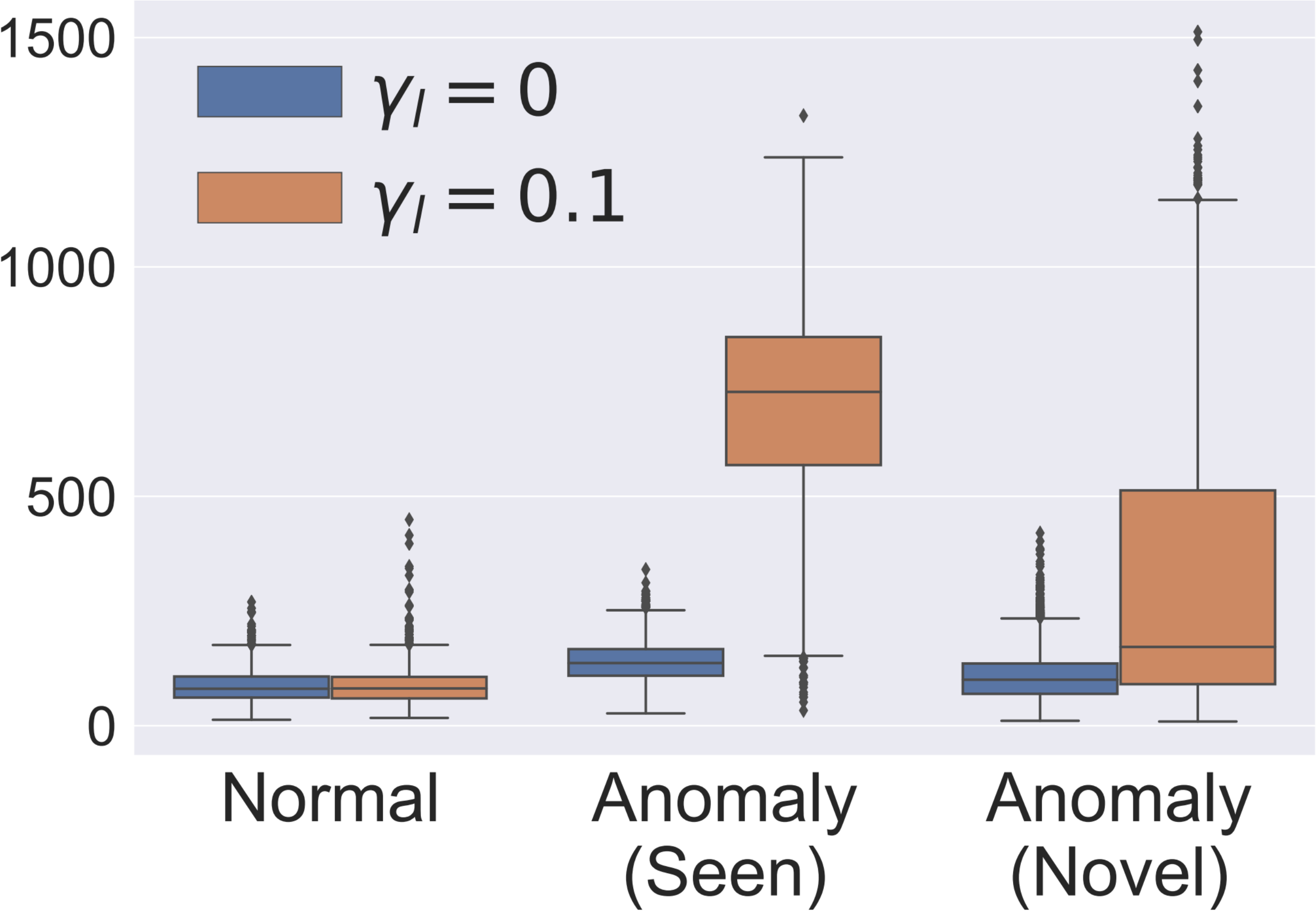}}
\caption{Boxplots of the reconstruction errors of normal samples, observed anomalies and novel anomalies, observed in our model with different collected-anomaly ratio $\gamma_l$.}
\label{img:recon}

\end{figure*}

\subsection{Reconstruction Error Visualization} 
In this experiment, we randomly choose one category as the normal-data category and another one as the available-anomaly category, from which a small proportion of samples are collected and used for training. Samples from the rest eight categories are viewed as novel anomalies. We train our proposed model under different collected-anomaly ratio $\gamma_l$ on the training dataset and then plot the boxplots of reconstruction errors of samples from the testing datasets in Fig. \ref{img:recon}, in which the case with $\gamma_l=0$ corresponds the scenario without using any anomalies during the training. It can be observed from the figure that as long as a proportion ($\gamma_l=0.1$) of anomalies are leveraged, the reconstruction error will be enlarged significantly on both of the observed and novel anomalies, while still keeping low on normal samples. The remarkable difference in reconstruction error will obviously facilitate the identification of anomalies and promote the detection accuracy. It can be also observed that the reconstruction error on the novel anomalies is not as large as that on the observed anomalies. This is reasonable since the model is trained to enlarge the reconstruction error on the category of observed anomalies. But thanks to the more accurate characterization to the data distribution, it is observed that the reconstruction errors on novel anomalies increase, too, making them easier to be detected.

\begin{figure}[!tb]
\centering
\includegraphics[width=0.98\linewidth]{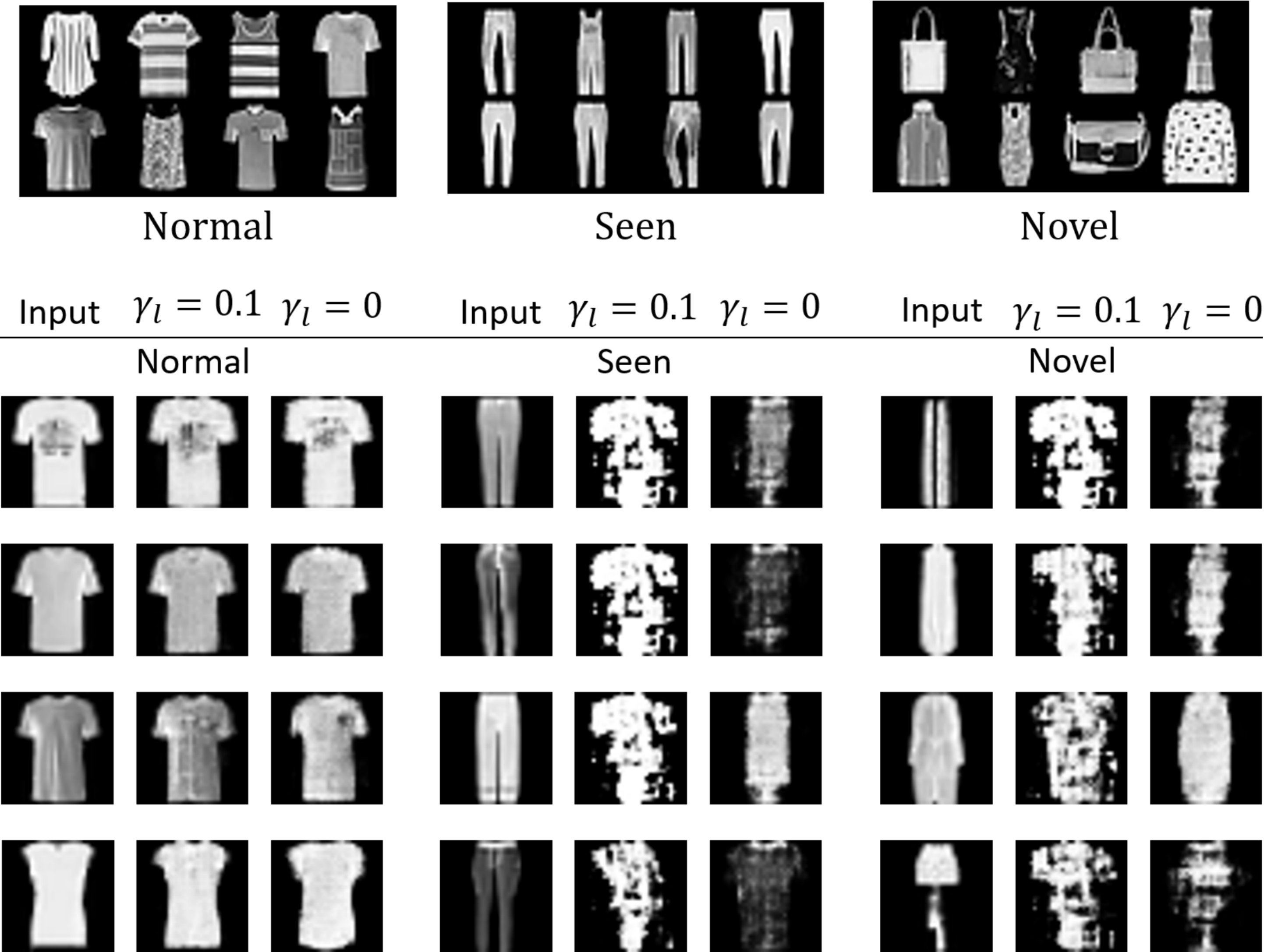}
\caption{Visualization analysis of reconstruction results on F-MNIST dataset. Top half shows the original images from normal, seen anomaly and novel anomaly category.  In the bottom half, three regions are correspond to the reconstruction results of normal, seen anomaly and novel anomaly respectively. The leftmost column of each region shows the input samples, the middle column shows the reconstructed images under $\gamma_l=0.1$ scenario, the rightmost column shows the reconstructed images under $\gamma_l=0$ scenario.}
\label{example}
\end{figure}

\vspace{-2mm}
\subsection{Reconstructed Images Visualization}
Considering that the image patterns in F-MNIST are easy to identify, we train our proposed model under different collected-anomaly ratio $\gamma_l$ on the F-MNIST dataset and then select the reconstructed images from the testing datasets to show in Fig. \ref{example}.  The case with $\gamma_l=0$ corresponds the scenario without using any anomalies during the training. It can be observed that for normal samples, whether utilizing the collected anomalies or not, 
the images can be well reconstructed. But without the leverage of collected anomalies, AA-BiGAN($\gamma_l=0$) seems to reconstruct the anomaly samples as well. The reconstruction  of the anomalous samples output by the AA-BiGAN($\gamma_l=0$) has a very similar shape contour to the original data. As long as a proportion($\gamma_l=0.1$)  of collected anomalies are leveraged, the reconstructed images of anomalies becomes blurry and noise, brings a significant difference between original images and reconstructed images. The comparison of 
reconstructed images intuitively explain how our model yields more discriminative reconstruction error criteria for better detection.

% \begin{table*}[!tb]

% 	\centering
% 	\normalsize
% 	{\begin{tabular}{lc<{\centering}c<{\centering}c<{\centering}c<{\centering}c<{\centering}c<{\centering}c<{\centering}c<{\centering}c<{\centering}}
% 			\toprule
% 			Data & \makecell[c]{Arrhythmia} & \makecell[c]{KDDCUP} & \makecell[c]{Shuttle} & \makecell[c]{Thyroid} & \makecell[c]{Yeast} & \makecell[c]{Abalone}&\makecell[c]{Covtype}\\
% 			\midrule
% 			DAGMM (clean)  & \textbf{0.50} & \textbf{0.94} & 0.48 & 0.44 & 0.11 & 0.16 & 0.18\\
% 			DAGMM (dirty) & 0.46 & 0.43 & 0.48 & 0.46 & 0.02 & 0.05 & 0.01\\
% 			LODA-AAD  & 0.45 & 0.88 & \textbf{0.97} & 0.51 & 0.31 & \textbf{0.54} & \textbf{0.97}\\
% 			Tree-AAD  & 0.29 & 0.89 & 0.92 & \textbf{0.86} & 0.32 & 0.53 &\textbf{0.94} \\
% 			DAE-UAI  & 0.47 & \textbf{0.94} & 0.93 & 0.57 & \textbf{0.33} & \textbf{0.55} & 0.86\\
% 			AA-BiGAN(ours)  & \textbf{0.50} & 0.93 & \textbf{0.96} & \textbf{0.62} & \textbf{0.41} & 0.53 & 0.82\\

% 			\bottomrule
% 	\end{tabular}}
% 	\caption{F1-score on tabular anomaly detection datasets under active anomaly detection scenario.}
% 	\label{tal:4}
% \end{table*}

\subsection{Generation Ability Assessment}
Extensive experiments are conducted to verify our proposed model's generation quality. Fig. \ref{generated} shows our model can generate high quality images which patterns are easy to distinguish. To further evaluate the our model's generation ability, we conduct experiments referring to \cite{lucic2018gans} settings. In each experiment, we regard one category as anomalies and train on the other nine categories. We train $20$ epochs for F-MNIST and MNIST and $100$ epochs for CIFAR-10, with the best observed FID to be reported. Although the main propose of our model is to enhance the distinguishability between normal and anomaly samples rather than obtaining high quality generated images, it can still be observed that our model's FID score is overall in the same level as  existing generative models, showing the generation ability of our proposed model.

\begin{table}[!tb]
	
	\centering
	\normalsize
	{\begin{tabular}{lccccc}
			\toprule
			& \makecell[c]{MNIST } & \makecell[c]{F-MNIST } &  \makecell[c]{CIFAR-10}  \\
			\midrule
			GAN & 9.8 $\pm$ 0.9 & 29.6 $\pm$ 1.6 & 72.7 $\pm$ 3.6\\
			WGAN & 6.7 $\pm$ 0.4 & 21.5 $\pm$ 1.6 & 55.2 $\pm$ 2.3\\
			LSGAN & 7.8 $\pm$ 0.6 & 30.7 $\pm$ 2.2 & 87.1 $\pm$ 47.5\\
			VAE & 40.1 $\pm$ 0.7 & 100.9 $\pm$ 3.0 & 155.7 $\pm$ 11.6\\
			AA-BiGAN & 8.9 $\pm$ 1.0 & 27.2 $\pm$ 1.8 & 111.3 $\pm$ 10.1\\
			
			\bottomrule
	\end{tabular}}
	\caption{The FID comparison on three datasets with different models.}
	\label{tal:2}
\end{table}

\begin{figure}[!tb]
	\centering
	
	\subfigure[MNIST]{\includegraphics[width=0.29\linewidth]{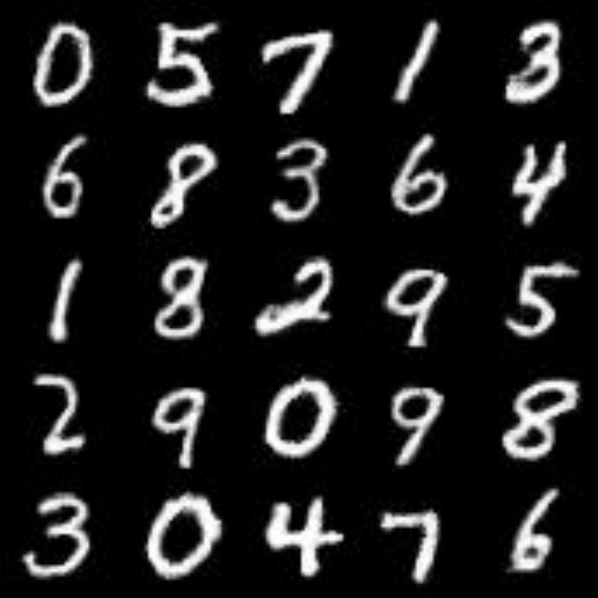}}
	\hspace{0.04\linewidth}
	\subfigure[F-MNIST]{\includegraphics[width=0.29\linewidth]{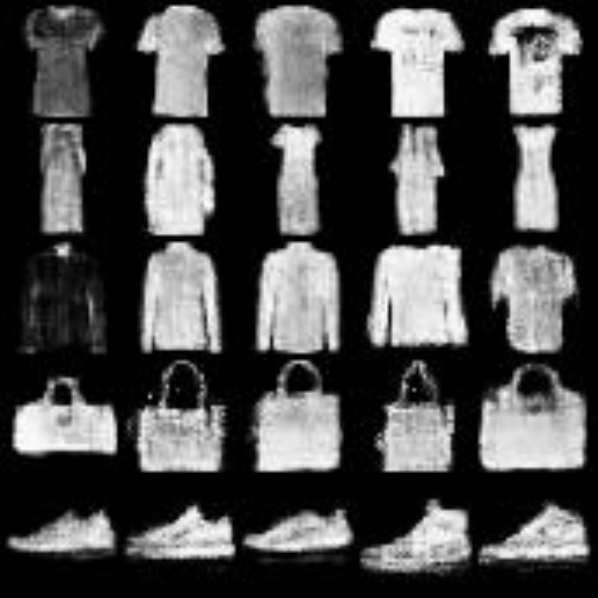}}
	\hspace{0.04\linewidth}
	\subfigure[CIFAR-10]{\includegraphics[width=0.29\linewidth]{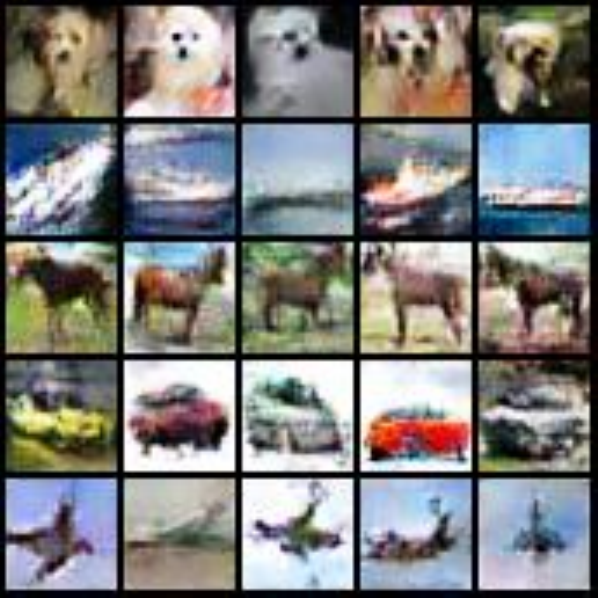}}
	\caption{Generated images on different datasets.}
	\label{generated}
\end{figure}

\subsection{Sensitivity Analysis}
We analyze the major hyperparameter $c$ of our proposed method. According to our developed theories, the parameter $c$ could be set arbitrarily,  in practical it better not make $c$ equal to $a$, $b$ and $\frac{a+b}{2}$ . We evaluate the performance of our model under four different values of $c$,  which is shown in Table \ref{Sensitivity analysis}. It can be seen that the performance is not sensitive to $c$. The results show that the best performance can be obtained when keeping the value of $c$ far away from the target value of other parameters. We thus set $c$ always to 3/4.

\begin{table}[!t]
	\setlength\tabcolsep{2.2pt} 
	\centering
	\resizebox{\columnwidth}{!}
	{\begin{tabular}{lc<{\centering}c<{\centering}c<{\centering}c<{\centering}c<{\centering}}
			\toprule
			\makecell[c]{$c$} & \makecell[c]{0.25} & \makecell[c]{0.4} & \makecell[c]{0.6} & \makecell[c]{0.75}\\
			\midrule
			$\gamma_l=0.05$, $\gamma_p=0$ & 94.5 $\!\pm\!$ 4.3 & 94.4 $\!\pm\!$ 4.3 & 94.5 $\!\pm\!$ 4.2 & 94.6 $\!\pm\!$ 4.2 \\
			
			$\gamma_l=0.05$, $\gamma_p=0.1$ & 91.6 $\!\pm\!$ 5.0 & 91.3 $\!\pm\!$ 5.3 & 91.3 $\!\pm\!$ 5.2 & 91.8 $\!\pm\!$ 4.9 \\
			\bottomrule
	\end{tabular}}
	\caption{Sensitivity analysis w.r.t. $c$ under different scenarios on F-MNIST dataset.}
	\label{Sensitivity analysis}
	
\end{table}

\begin{table*}[!htb]
\setlength\tabcolsep{3pt} 
\centering

\scriptsize
{
\begin{tabular}{ccccccccccccc}
\toprule
\makecell[c]{\\Data} & \makecell[c]{\\$\gamma_p$} & \makecell[c]{\\OCSVM} & \makecell[c]{Deep\\ SVDD} & \makecell[c]{\\ALAD} & \makecell[c]{\\CAE} &\makecell[c]{SSAD\\ Raw~} & \makecell[c]{SSAD\\ Hybrid~} & \makecell[c]{\\SS-DGM} & \makecell[c]{Deep\\ SAD} & \makecell[c]{\makecell[c]{Supervised\\ Classifier}} & \makecell[c]{\\NDA}& \makecell[c]{AA-BiGAN \\ (Ours)}\\
\cmidrule(lr){1-1} \cmidrule(lr){2-2} \cmidrule(lr){3-3} \cmidrule(lr){4-4} \cmidrule(lr){5-5} \cmidrule(lr){6-6} \cmidrule(lr){7-7} \cmidrule(lr){8-8} \cmidrule(lr){9-9} 
\cmidrule(lr){10-10} 
\cmidrule(lr){11-11} \cmidrule(lr){12-12} \cmidrule(lr){13-13}
& .00 & 96.3 $\pm$ 2.5 & 92.8 $\pm$ 4.9 & 96.2 $\pm$ 3.3 & 92.9 $\pm$ 5.7 & \textbf{97.9 $\pm$ 1.8} & 97.4 $\pm$ 2.0 & 92.2 $\pm$ 5.6 & 96.7 $\pm$ 2.4 & 94.5 $\pm$ 4.6 & 96.8 $\pm$ 3.2 & 97.8 $\pm$ 2.0 \\
& .01 & 95.6 $\pm$ 2.5 & 92.1 $\pm$ 5.1  & 93.3 $\pm$ 4.2  & 91.3 $\pm$ 6.1 & 96.6 $\pm$ 2.4 & 95.2 $\pm$ 2.3 & 92.0 $\pm$ 6.0 & 96.4 $\pm$ 2.7 & 91.5 $\pm$ 5.9 & 96.4 $\pm$ 2.7 & \textbf{96.9 $\pm$ 2.3}  \\
MNIST & .05 & 93.8 $\pm$ 3.9 & 89.4 $\pm$ 5.8 & 92.6 $\pm$ 5.2 & 87.2 $\pm$ 7.1 & 93.4 $\pm$ 3.4 & 89.5 $\pm$ 3.9 & 91.0 $\pm$ 6.9 & 93.5 $\pm$ 4.1 & 86.7 $\pm$ 7.4 & 93.2 $\pm$ 4.9 & \textbf{95.1 $\pm$ 3.0} \\
& .10 & 91.4 $\pm$ 5.1 & 86.5 $\pm$ 6.8 & 91.0 $\pm$ 5.6 & 83.7 $\pm$ 8.4 & 90.7 $\pm$ 4.4 & 86.0 $\pm$ 4.6 & 89.7 $\pm$ 7.5 & 91.2 $\pm$ 4.9 & 83.6 $\pm$ 8.2 & 90.4 $\pm$ 5.9 & \textbf{93.2 $\pm$ 4.3} \\
& .20 & 85.9 $\pm$ 7.6 & 81.5 $\pm$ 8.4 & 88.0 $\pm$ 6.2 & 78.6 $\pm$ 10.3 & 87.4 $\pm$ 5.6 & 82.1 $\pm$ 5.4 & 87.4 $\pm$ 8.6 & 86.6 $\pm$ 6.6 & 79.7 $\pm$ 9.4 & 87.3 $\pm$ 6.8 & \textbf{90.8 $\pm$ 5.1} \\
\cmidrule(lr){1-1} \cmidrule(lr){2-2} \cmidrule(lr){3-3} \cmidrule(lr){4-4} \cmidrule(lr){5-5} \cmidrule(lr){6-6} \cmidrule(lr){7-7} \cmidrule(lr){8-8} \cmidrule(lr){9-9} 
\cmidrule(lr){10-10}
\cmidrule(lr){11-11} 
\cmidrule(lr){12-12}  \cmidrule(lr){13-13}
& .00 & 91.2 $\pm$ 4.7 & 89.2 $\pm$ 6.2 & 91.1 $\pm$ 5.3 & 90.2 $\pm$ 5.8 & 94.0 $\pm$ 4.4 & 90.5 $\pm$ 5.9 & 71.4 $\pm$ 12.7 & 90.5 $\pm$ 6.5 & 76.8 $\pm$ 13.2 & 91.0 $\pm$ 7.0 &\textbf{94.6 $\pm$ 4.2}\\
& .01 & 91.5 $\pm$ 4.6 & 86.3 $\pm$ 6.3 & 91.3 $\pm$ 4.2  & 87.1 $\pm$ 7.3 & 92.2 $\pm$ 4.9 & 87.8 $\pm$ 6.1 & 71.2 $\pm$ 14.3 & 87.2 $\pm$ 7.1 & 67.3 $\pm$ 8.1 & 89.8 $\pm$ 8.6 & \textbf{94.1 $\pm$ 4.3}\\
F-MNIST & .05 & 90.7 $\pm$ 4.9 & 80.6 $\pm$ 7.1 & 90.5 $\pm$ 4.5 & 81.6 $\pm$ 9.6 & 88.3 $\pm$ 6.2 & 82.7 $\pm$ 7.8 & 71.9 $\pm$ 14.3 & 81.5 $\pm$ 8.5 & 59.8 $\pm$ 4.6 & 87.5 $\pm$ 10.5 & \textbf{93.1 $\pm$ 4.6}  \\
& .10 & 89.3 $\pm$ 6.2 & 76.2 $\pm$ 7.3 & 89.6 $\pm$ 4.5 & 77.4 $\pm$ 11.1 & 85.6 $\pm$ 7.0 & 79.8 $\pm$ 9.0 & 72.5 $\pm$ 15.5 & 78.2 $\pm$ 9.1 & 56.7 $\pm$ 4.1 &85.1 $\pm$ 12.9 & \textbf{91.8 $\pm$ 4.9} \\
& .20 & 88.1 $\pm$ 6.9 & 69.3 $\pm$ 6.3 & 88.3 $\pm$ 4.6 & 72.5 $\pm$ 12.6 & 81.9 $\pm$ 8.1 & 74.3 $\pm$ 10.6 & 70.8 $\pm$ 16.0 & 74.8 $\pm$ 9.4 & 53.9 $\pm$ 2.9 & 80.7 $\pm$ 15.1 & \textbf{90.2 $\pm$ 5.2} \\
\cmidrule(lr){1-1} \cmidrule(lr){2-2} \cmidrule(lr){3-3} \cmidrule(lr){4-4} \cmidrule(lr){5-5} \cmidrule(lr){6-6} \cmidrule(lr){7-7} \cmidrule(lr){8-8} \cmidrule(lr){9-9} 
\cmidrule(lr){10-10}
\cmidrule(lr){11-11} 
\cmidrule(lr){12-12} \cmidrule(lr){13-13}
& .00 & 63.8 $\pm$ 9.0 & 60.9 $\pm$ 9.4 & 65.0 $\pm$ 9.4 & 56.2 $\pm$ 13.2 &73.8 $\pm$ 7.6 & 73.3 $\pm$ 8.4 & 50.8 $\pm$ 4.7 & 77.9 $\pm$ 7.2 & 63.5 $\pm$ 8.0 & 78.9 $\pm$ 7.7 & \textbf{81.2 $\pm$ 6.4}\\
& .01 & 63.8 $\pm$ 9.3 & 60.5 $\pm$ 9.4  & 65.4 $\pm$ 9.8 & 56.2 $\pm$ 13.1 & 73.0 $\pm$ 8.0 & 72.8 $\pm$ 8.1 & 51.1 $\pm$ 4.7 & 76.5 $\pm$ 7.2 & 62.9 $\pm$ 7.3  & 78.2 $\pm$ 7.1 & \textbf{81.9 $\pm$ 6.6} \\
CIFAR-10 & .05 & 62.6 $\pm$ 9.2 & 59.6 $\pm$ 9.8 & 65.1 $\pm$ 9.3 & 55.7 $\pm$ 13.3 & 71.5 $\pm$ 8.2 & 71.0 $\pm$ 8.4 & 50.1 $\pm$ 2.9 & 74.0 $\pm$ 6.9 & 62.2 $\pm$ 8.2 & 76.0 $\pm$ 7.0 & \textbf{81.1 $\pm$ 6.7} \\
& .10 & 62.9 $\pm$ 8.2 & 58.6 $\pm$ 10.0 & 64.4 $\pm$ 9.3 & 55.4 $\pm$ 13.3 & 54.6 $\pm$ 13.3 & 69.3 $\pm$ 8.5 & 50.5 $\pm$ 3.6 & 71.8 $\pm$ 7.0 & 60.6 $\pm$ 8.3 & 74.5 $\pm$ 7.7& \textbf{78.2 $\pm$ 7.1} \\
& .20 & 61.9 $\pm$ 8.1 & 57.0 $\pm$ 10.6 & 64.2 $\pm$ 9.4 & 54.6 $\pm$ 14.3 & 67.8 $\pm$ 8.6 & 67.9 $\pm$ 8.1 & 50.1 $\pm$ 1.7 & 68.5 $\pm$ 7.1 & 58.5 $\pm$ 6.7 & 72.1 $\pm$ 8.5 & \textbf{77.0 $\pm$ 8.1} \\
\bottomrule
\end{tabular}
\caption{Complete detection performance as a function of pollution ratio $\gamma_p$ on MNIST, F-MNIST and CIFAR-10 datasets.}

\label{tablep}}

\end{table*}

\renewcommand \arraystretch{1.0}
\begin{table*}[!hbt]
\setlength\tabcolsep{3pt} 
\centering

\scriptsize
{\begin{tabular}{ccccccccccccc}
\toprule
\makecell[c]{\\Data} & \makecell[c]{\\$k_{l}$} & \makecell[c]{\\OCSVM} & \makecell[c]{Deep\\ SVDD} & \makecell[c]{\\ALAD} & \makecell[c]{\\CAE} & \makecell[c]{SSAD\\ Raw~} & \makecell[c]{SSAD\\ Hybrid~} & \makecell[c]{\\SS-DGM} & \makecell[c]{Deep\\ SAD} & \makecell[c]{Supervised\\ Classifier} & \makecell[c]{\\NDA}& \makecell[c]{AA-BiGAN \\ (Ours)}\\
\cmidrule(lr){1-1} \cmidrule(lr){2-2} \cmidrule(lr){3-3} \cmidrule(lr){4-4} \cmidrule(lr){5-5} \cmidrule(lr){6-6} \cmidrule(lr){7-7} \cmidrule(lr){8-8} \cmidrule(lr){9-9} 
\cmidrule(lr){10-10} 
\cmidrule(lr){11-11} \cmidrule(lr){12-12}\cmidrule(lr){13-13}
& 0 & \textbf{91.4 $\pm$ 5.1} & 86.5 $\pm$ 6.8 & 91.0 $\pm$ 5.6 & 83.7 $\pm$ 8.4 & 88.0 $\pm$ 6.0 & \textbf{91.4 $\pm$ 5.1} &  & 86.5 $\pm$ 6.8 &  & 88.5 $\pm$ 6.4 & 91.0 $\pm$ 5.6 \\
& 1 &  &  &  & & 90.7 $\pm$ 4.4 & 86.0 $\pm$ 4.6 & 89.7 $\pm$ 7.5 & 91.2 $\pm$ 4.9 & 83.6 $\pm$ 8.2 & 90.4 $\pm$ 3.9 &\textbf{94.5 $\pm$ 3.7} \\
MNIST & 2 &  &  &  & & 92.5 $\pm$ 3.6 & 87.7 $\pm$ 3.8 & 92.8 $\pm$ 5.3 & 92.0 $\pm$ 3.6 & 90.3 $\pm$ 4.6 & 94.7 $\pm$ 3.2 &\textbf{95.9 $\pm$ 2.7} \\
& 3 &  &  &  & & 93.9 $\pm$ 3.3 & 89.8 $\pm$ 3.3 & 94.9 $\pm$ 4.2 & 94.7 $\pm$ 2.8 & 93.9 $\pm$ 8.6 & 96.0 $\pm$ 3.0 &\textbf{96.3 $\pm$ 2.7} \\
& 5 &  &  &  & & 95.5 $\pm$ 2.5 & 91.9 $\pm$ 3.0 & 96.7 $\pm$ 2.3 & 97.3 $\pm$ 1.8 & 96.9 $\pm$ 1.7 & \textbf{97.9 $\pm$ 1.6} &  97.7 $\pm$ 1.5\\
\cmidrule(lr){1-1} \cmidrule(lr){2-2} \cmidrule(lr){3-3} \cmidrule(lr){4-4} \cmidrule(lr){5-5} \cmidrule(lr){6-6} \cmidrule(lr){7-7} \cmidrule(lr){8-8} \cmidrule(lr){9-9} 
\cmidrule(lr){10-10}
\cmidrule(lr){11-11} 
\cmidrule(lr){12-12}\cmidrule(lr){13-13} 
& 0 & 89.3 $\pm$ 6.2 & 76.2 $\pm$ 7.3 & 89.6 $\pm$ 4.5 & 77.4 $\pm$ 11.1 & 89.5 $\pm$ 6.1 & 89.3 $\pm$ 6.2 &  & 76.2 $\pm$ 7.3 & &73.3 $\pm$ 18.2 &\textbf{91.0 $\pm$ 4.5} \\
& 1 &  &  &  & & 85.6 $\pm$ 7.0 & 79.8 $\pm$ 9.0 & 72.5 $\pm$ 15.5 & 78.2 $\pm$ 9.1 & 56.7 $\pm$ 4.1 & 85.1 $\pm$ 12.9 &\textbf{92.3 $\pm$ 5.1} \\
F-MNIST & 2 &  &  &  &  & 87.8 $\pm$ 6.1 & 80.1 $\pm$ 10.5 & 74.3 $\pm$ 15.4 & 80.5 $\pm$ 8.2 & 62.3 $\pm$ 2.9 & 90.2 $\pm$ 7.9 &\textbf{93.6 $\pm$ 4.6} \\
& 3 &  &  &  &  & 89.4 $\pm$ 5.5 & 83.8 $\pm$ 9.4 & 77.5 $\pm$ 14.7 & 83.9 $\pm$ 7.4 & 67.3 $\pm$ 3.0 & 92.2 $\pm$ 6.7 & \textbf{94.2 $\pm$ 4.4} \\
& 5 &  &  &  & & 91.2 $\pm$ 4.8 & 86.8 $\pm$ 7.7 & 79.9 $\pm$ 13.8 & 87.3 $\pm$ 6.4 & 75.3 $\pm$ 2.7 & \textbf{95.3 $\pm$ 4.5} & \textbf{95.3 $\pm$ 4.4} \\
\cmidrule(lr){1-1} \cmidrule(lr){2-2} \cmidrule(lr){3-3} \cmidrule(lr){4-4} \cmidrule(lr){5-5} \cmidrule(lr){6-6} \cmidrule(lr){7-7} \cmidrule(lr){8-8} \cmidrule(lr){9-9} 
\cmidrule(lr){10-10}
\cmidrule(lr){11-11} 
\cmidrule(lr){12-12}\cmidrule(lr){13-13}
& 0 & 62.9 $\pm$ 8.2 & 58.6 $\pm$ 10.0 & 64.4 $\pm$ 9.3 & 55.4 $\pm$ 13.3 & 60.8 $\pm$ 10.7 & 62.9 $\pm$ 8.2 &  & 58.6 $\pm$ 10.0 & & 64.2 $\pm$ 6.4 & \textbf{65.4 $\pm$ 9.1}  \\
& 1 &  &   &  & & 69.8 $\pm$ 8.4 & 69.3 $\pm$ 8.5 & 50.5 $\pm$ 3.6 & 71.8 $\pm$ 7.0  & 60.6 $\pm$ 8.3 & 74.5 $\pm$ 7.7 & \textbf{78.2 $\pm$ 7.1}\\
CIFAR-10 & 2 &  &  & & & 73.0 $\pm$ 7.1 & 72.3 $\pm$ 7.5 & 50.3 $\pm$ 2.4 & 75.2 $\pm$ 6.4 & 61.0 $\pm$ 6.6 & 75.9 $\pm$ 8.1 & \textbf{84.3 $\pm$ 6.1}  \\
& 3 &  &  &  & & 73.8 $\pm$ 6.6 & 73.3 $\pm$ 7.0 & 50.0 $\pm$ 0.7 & 77.5 $\pm$ 5.9 & 62.7 $\pm$ 6.8 & 76.8 $\pm$ 7.6 & \textbf{85.6 $\pm$ 5.8} \\
& 5 &  &  &  & & 75.1 $\pm$ 5.5 & 74.2 $\pm$ 6.5 & 50.0 $\pm$ 1.0 & 80.4 $\pm$ 4.6 & 60.9 $\pm$ 4.6 & 78.8 $\pm$ 6.2& \textbf{87.8 $\pm$ 5.2} \\
\bottomrule
\end{tabular}
\caption{Complete detection performance as a function of the number of categories of collected anomalies $k_l$.}

\label{tablek}}
\end{table*}

\begin{table*}[!hbt]
	\scriptsize
	\centering
	\begin{tabular}{lc<{\centering}c<{\centering}c<{\centering}c<{\centering}c<{\centering}c<{\centering}c<{\centering}c<{\centering}c<{\centering}c<{\centering}}
			\toprule
			\makecell[l]{\\Data} & \makecell[c]{OCSVM\\ Hybrid~}&\makecell[c]{\\ CAE}&\makecell[c]{\\NDA}& \makecell[c]{Deep\\ SVDD}& \makecell[c]{\\AAD}& \makecell[c]{SSAD\\Hybrid~} & \makecell[c]{\\SS-DGM} & \makecell[c]{Supervised\\ Classifier} & \makecell[c]{Deep\\ SAD}  & \makecell[c]{AA-BiGAN \\(Ours)}\\
			\midrule
			Arrhythmia & 76.7 $\pm$ 6.2 & 74.0 $\pm$ 7.5 & 74.1 $\pm$ 8.9 & 74.6 $\pm$ 9.0  & 75.8 $\!\pm\!$ 3.2 & 78.3 $\pm$ 5.1 & 50.3 $\pm$ 9.8 &39.2 $\pm$ 9.5 & 75.9 $\pm$ 8.7 & \textbf{80.7 $\pm$ 3.2}\\
			
			Cardio & 82.8 $\pm$ 9.3 & 94.3 $\pm$ 2.0 & 86.0 $\pm$ 5.8 & 84.8 $\pm$ 3.6  & 90.7 $\!\pm\!$ 2.1 & 86.3 $\pm$ 5.8 & 66.2 $\pm$ 14.3 & 83.2 $\pm$ 9.6 &95.0 $\pm$ 1.6  & \textbf{98.0 $\pm$ 1.2} \\
			Satellite & 68.6 $\pm$ 4.8 & 80.0 $\pm$ 1.7 & 78.0 $\pm$ 3.3 & 79.8 $\pm$ 4.1& 77.2 $\!\pm\!$ 4.1 & 86.9 $\pm$ 2.8 & 57.4 $\pm$ 6.4 & 87.2 $\pm$ 2.1 &\textbf{91.5 $\pm$ 1.1} & 87.4 $\pm$ 2.3 \\
			Satimage-2 &96.7 $\pm$ 2.1 & 99.0$\pm$ 0.0 & 94.8 $\pm$ 2.3 & 98.3 $\pm$ 1.4 &\textbf{99.9 $\!\pm\!$ 0.1} & 96.8 $\pm$ 2.1 & 99.2 $\pm$ 0.6 & 99.1 $\pm$ 0.1 &\textbf{99.9 $\pm$ 0.1} & \textbf{99.9 $\pm$ 0.1}\\
			Shuttle &94.1 $\pm$ 9.5 & 98.2 $\pm$ 1.2 & 98.2 $\pm$ 0.5& 86.3 $\pm$ 7.5 &  99.0 $\!\pm\!$ 0.2 & 97.7 $\pm$ 1.0 & 97.9 $\pm$ 0.3 &95.1 $\pm$ 8.0 &98.4 $\pm$ 0.9 & \textbf{99.1 $\pm$ 0.1} \\
			Thyroid &91.2 $\pm$ 4.0 & 75.2 $\pm$ 10.2 & 91.9 $\pm$ 4.5& 72.0 $\pm$ 9.7 & 96.5 $\!\pm\!$ 0.8& 95.3 $\pm$ 3.1 & 72.7 $\pm$ 12.0 & 97.8 $\pm$ 2.6 & 98.6 $\pm$ 0.9 & \textbf{98.9 $\pm$ 0.1}\\
			\bottomrule
	\end{tabular}
	\caption{Complete detection performance on classic anomaly detection datasets.}
	\label{tabularre}
\end{table*}

\subsection{Empirical Study of Computational Efficiency} We investigate the training complexity by comparing the training duration of our method and ALAD \cite{ALAD18}. Theoretically, our model employs the same network architecture as ALAD model, the only difference is that we have added one more term to the updating function. Thus the complexity of our method is similar to the baseline ALAD model. On the F-MNIST and CIFAR-10 datasets, the one-epoch training time of our model is 8.833 seconds and 10.108 seconds, while that of the ALAD model is 7.924 seconds and 8.729 seconds. It can be seen that our model, though with much stronger performance, can be trained almost as efficiently as ALAD.

\subsection{Complete Tables of Experimental Results}
In addition to the semi-supervised models \cite{SSAD,SS-DGM,DeepSAD,NDA21,aad2020} mentioned in the main paper text, we also consider some unsupervised models \cite{OCSVM,DeepSVDD,ALAD18} for comparison. Table \ref{tablep} lists the complete experimental results as a function of pollution ratio $\gamma_p$ on the three image datasets, in which the ratio of available anomalies $\gamma_l$ is fixed as 0.05. Specifically, an averaged improvement of $1.3\%$, $1.6\%$, $3.7\%$ on MNIST, F-MNIST, CIFAR-10 are observed. As the pollution ratio $\gamma_p$ increases from 0.0 to 0.2, the superiority of our model is even more evident. Another noteworthy point is that ALAD model \cite{ALAD18} also appear to be more robust than distance-based models, which further confirms that the GAN-based probabilistic methods are often more tolerant of data pollution than the distance-based methods. 

Table \ref{tablek} lists the specific experimental results as a function of the number of categories of collected anomalies $k_l$ increases, in which $\gamma_l$ and $\gamma_p$ are fixed as 0.05 and 0.1. Our proposed model improves the average performance by $0.2\%$, $2.8\%$,$5.3\%$ on MNIST, F-MNIST, CIFAR-10. On the specific CIFAR-10 dataset, thanks to the strong ability of GANs in modeling the complex images, our model achieves a remarkable improvement in all level.

Table \ref{tabularre} lists the complete experimental results on tabular dataset. With the strong modeling capacity of deep neural networks, our proposed batch-processing model achieve an average improvement of $3.1\%$ performance compared to AAD. In comparison with the distance-based methods, the competitive results can be observed, which demonstrating the effectiveness of the proposed probabilistic method on classic anomaly detection datasets.  
\end{document}